\newtheorem{theorem}{Theorem}
  \providecommand\BibTeX{{%
    \normalfont B\kern-0.5em{\scshape i\kern-0.25em b}\kern-0.8em\TeX}}}
\begin{document}

%%
%% The "title" command has an optional parameter,
%% allowing the author to define a "short title" to be used in page headers.
\title{GDOD: Effective Gradient Descent using Orthogonal Decomposition for Multi-Task Learning}

%%
%% The "author" command and its associated commands are used to define
%% the authors and their affiliations.
%% Of note is the shared affiliation of the first two authors, and the
%% "authornote" and "authornotemark" commands
%% used to denote shared contribution to the research.
\author{Xin Dong}
\authornote{Both authors contributed equally to this research.}
\orcid{1234-5678-9012}
%\author{G.K.M. Tobin}
%\authornotemark[1]
%\email{webmaster@marysville-ohio.com}
\affiliation{%
  \institution{Ant Group}
  %\streetaddress{P.O. Box 1212}
  \city{Shanghai}
 % \state{Ohio}
  \country{China}
}
\email{zhaoxin.dx@antgroup.com}

\author{Ruize Wu}
\authornotemark[1]
\affiliation{%
  \institution{Ant Group}
  %\streetaddress{P.O. Box 1212}
  \city{Hangzhou}
 % \state{Ohio}
  \country{China}
  }
\email{kezhui.wrz@antgroup.com}

\author{Chao Xiong}
\affiliation{%
  \institution{Ant Group}
  %\streetaddress{P.O. Box 1212}
  \city{Shanghai}
 % \state{Ohio}
  \country{China}
}
\email{xc272640@antgroup.com}

\author{Hai Li}
\affiliation{%
 \institution{Ant Group}
  %\streetaddress{P.O. Box 1212}
  \city{Shanghai}
 % \state{Ohio}
  \country{China}
 }
\email{tianshu.lh@antgroup.com}

\author{Lei Cheng}
\affiliation{%
  \institution{Ant Group}
  %\streetaddress{1 Th{\o}rv{\"a}ld Circle}
  \city{Hangzhou}
  \country{China}
  }
\email{lei.chenglei@antgroup.com}

\author{Yong He}
\affiliation{%
  \institution{Ant Group}
  %\streetaddress{1 Th{\o}rv{\"a}ld Circle}
  \city{Hangzhou}
  \country{China}
  }
\email{heyong.h@antgroup.com}

\author{Shiyou Qian}
\affiliation{%
  \institution{Shanghai Jiao Tong University}
  %\streetaddress{1 Th{\o}rv{\"a}ld Circle}
  \city{Shanghai}
  \country{China}
  }
\email{qshiyou@sjtu.edu.cn}

\author{Jian Cao}
\affiliation{%
  \institution{Shanghai Jiao Tong University}
  %\streetaddress{1 Th{\o}rv{\"a}ld Circle}
  \city{Shanghai}
  \country{China}
  }
\email{cao-jian@cs.sjtu.edu.cn}

\author{Linjian Mo}
\authornote{Corresponding author.}
\authornotemark[0]
\affiliation{%
  \institution{Ant Group}
  \city{Shanghai}
  \country{China}
 }
\email{linyi01@antgroup.com}

%%
%% By default, the full list of authors will be used in the page
%% headers. Often, this list is too long, and will overlap
%% other information printed in the page headers. This command allows
%% the author to define a more concise list
%% of authors' names for this purpose.
\renewcommand{\shortauthors}{Xin Dong et al.}

%%
%% The abstract is a short summary of the work to be presented in the
%% article.
\begin{abstract}
Multi-task learning (MTL) aims at solving multiple related tasks simultaneously and has experienced rapid growth in recent years.
However, MTL models often suffer from performance degeneration with negative transfer due to learning several tasks simultaneously.
Some related work attributed the source of the problem is the conflicting gradients.
In this case, it is needed to select useful gradient updates for all tasks carefully.
To this end, we propose a novel optimization approach for MTL, named GDOD, which manipulates gradients of each task using an orthogonal basis decomposed from the span of all task gradients.
GDOD decomposes gradients into task-shared and task-conflict components explicitly and adopts a general update rule for avoiding interference across all task gradients. 
This allows guiding the update directions depending on the task-shared components.
Moreover, we prove the convergence of GDOD theoretically under both convex and non-convex assumptions. 
Experiment results on several multi-task datasets not only demonstrate the significant improvement of GDOD performed to existing MTL models but also prove that our algorithm outperforms state-of-the-art optimization methods in terms of \emph{AUC} and \emph{Logloss} metrics.
\end{abstract}

%%
%% The code below is generated by the tool at http://dl.acm.org/ccs.cfm.
%% Please copy and paste the code instead of the example below.
%%
\begin{CCSXML}
<ccs2012>
<concept>
<concept_id>10010147.10010257.10010258.10010262</concept_id>
<concept_desc>Computing methodologies~Multi-task learning</concept_desc>
<concept_significance>500</concept_significance>
</concept>
<concept>
<concept_id>10010147.10010257.10010258.10010262.10010277</concept_id>
<concept_desc>Computing methodologies~Transfer learning</concept_desc>
<concept_significance>500</concept_significance>
</concept>
</ccs2012>
\end{CCSXML}

\ccsdesc[500]{Computing methodologies~Multi-task learning}
\ccsdesc[500]{Computing methodologies~Transfer learning}

%%
%% Keywords. The author(s) should pick words that accurately describe
%% the work being presented. Separate the keywords with commas.
\keywords{multi-task learning, orthogonal decomposition, gradient conflict}

%% A "teaser" image appears between the author and affiliation
%% information and the body of the document, and typically spans the
%% page.

%%
%% This command processes the author and affiliation and title
%% information and builds the first part of the formatted document.
\maketitle

\section{Introduction}
Multi-task learning (MTL) aims to build a shared model that learns multiple related tasks simultaneously.
Compared to single-task learning, it can significantly improve learning efficiency and prediction accuracy through knowledge sharing between tasks~\cite{caruana1997multitask}. 
This allows MTL models to deploy to a wide range of real-world applications, such as computer vision~\cite{liu2019end}, natural language processing~\cite{collobert2008unified}, online recommendation and advertising systems~\cite{tang2020progressive}.
Recently, MTL has acted as a regularizer during network learning, leading to more meaningful neural representations and better generalization~\cite{subramanian2018learning}.

In practice, the training process of the MTL network is not always ideal.
Since the competition of shared parameter updates may harm individual tasks.
The MTL approach often leads to networks that accurately improve the performance of a subset of the tasks, while the rest suffer, a phenomenon referred to as \textit{negative transfer} or \textit{destructive interference}~\cite{ruder2017overview}.
Minimizing the negative transfer is a key goal for MTL models.
To mitigate this problem, several works~\cite{standley2020tasks,zamir2018taskonomy} opted to cluster tasks into groups based on prior beliefs about their similarity or relatedness.

Alternatively, some related work attributed the source of the problem to the gradient conflict~\cite{yu2020gradient}.
Several approaches have been proposed to minimize conflict between the updates across multiple tasks.
In this context, we split these approaches into two categories.
On one hand, gradients with different magnitudes lead to parameter updating dominated by a subset of tasks.
Consequently, a number of approaches have been developed to tune a set of task-weighting parameters to balance relative gradient magnitudes for different tasks~\cite{chen2018gradnorm,kendall2018multi}.
However, these approaches do not solve the problem of task gradients canceling out due to them pointing towards different directions.
On the other hand, some approaches~\cite{sener2018multi,yu2020gradient,liu2021conflict} find a common gradient descent direction for all tasks so that they do not cancel each other.
However, such solutions either cannot distinguish the conflicting gradients explicitly or cannot mitigate conflicting gradients completely.

Here, we argue the gradient conflict problem and reveal an advanced problem: how to distinguish the conflicting gradients and mitigate their impact on each task.
For this purpose, we instead present an approach that straightly manipulates gradients and mitigates the interference across tasks.
Specifically, we decompose the gradient by the orthogonal basis in the subspace spanned by all task per-example gradients. 
We analyze the updates of each task according to its impact on other tasks. 
So that each task gradients can be decomposed into two components: 1) task-shared component which is helpful for all tasks; and 2) task-conflict component which interferes with other tasks.
Only the task-shared component is used to update the network.
To achieve a tractable approach, we introduce a novel and robust algorithm, named GDOD, to estimate the subspace spanned by all task gradients and decompose each task update appropriately. 
As a result, we can integrate our approach with existing MTL models.
To evaluate the performance of GDOD, we conduct extensive experiments on three available public multi-task datasets and a large-scale industrial dataset. 
Consequently, GDOD guarantees convergence in theory and outperforms other state-of-the-art optimization methods across all datasets in experiments.

In light of the above background, the main contributions of this paper are the following:
\begin{itemize}
\item We propose an optimization approach, named GDOD, to manipulate each task gradient using an orthogonal decomposition built from the span of all task gradients.
GDOD decomposes gradients into task-shared and task-conflict components explicitly and adopts a general update rule for avoiding interference across all task gradients. 
%\item We show theoretically and in simulation that GDOD leads to converge better than other optimization approaches.
\item We prove the convergence of GDOD theoretically under both convex and non-convex assumptions.
%and show that GDOD converges faster than other optimization approaches in practice. 
\item We conduct extensive experiments on several multi-task datasets to evaluate the effectiveness of GDOD.
Experiment results not only demonstrate the significant improvement of GDOD performed to existing MTL models but also outperform state-of-the-art optimization methods across all datasets in terms of AUC metric.
\end{itemize}

\section{Related Work}
Efficient multi-task learning models and optimization approaches of MTL models are two research areas related to our work. 
%In this section, we briefly discuss related works in these two areas.

\subsection{Multi-Task Learning Models} 
%As a subfield of machine learning, multi-task learning has been widely applied to many domain  such as computer vision\cite{misra2016cross, liu2019end, huang2021age}, natural language processing\cite{collobert2008unified, dong2015multi, mccann2018natural} and recommendation system \cite{fei2021gemnn, wen2021hierarchically, ding2021mssm}. 
The learning conception of MTL that modeling the shared representation for related tasks brings many benefits. 
However, MTL may suffer from negative transfer due to task conflicts as parameters are straightforwardly shared between tasks.
To deal with task conflicts, many works design different network architectures that allow optimal knowledge sharing between tasks.

Cross-stitch network~\cite{misra2016cross} and sluice network~\cite{ruder2017sluice} propose to learn weights of linear combinations to fuse representations from different tasks selectively.
However, the shared representations are combined with the same static weights in these models and the negative transfer is not addressed.
More studies apply the gate structure and attention network for representation fusion.
MOE~\cite{jacobs1991adaptive} splits the shared bottom layer into experts and combines experts through a gating network.
MMoE~\cite{ma2018modeling} and PLE~\cite{tang2020progressive} extend MOE to utilize different gate nets to aggregate experts for each task.
Similarly, MARN~\cite{zhao2019multiple} employs multi-head self-attention to learn different representations with different feature sets for each task.
%\cite{liu2019end} introduces attention modules allowing task specific networks to learn which features from the shared feature network to use at distinct layers.
However, none of the above works has explicitly addressed the issues of joint optimization of shared representation learning.

There are also some works utilizing neural architecture search (NAS) approaches to find a good MTL network architecture.
SNR framework~\cite{ma2019snr} controls connections between sub-networks by binary random variables and applies NAS~\cite{zoph2016neural} to search for the optimal structure.
Similarly, Gumbel-matrix routing framework~\cite{maziarz2019gumbel} learns to route MTL models formulated as a binary matrix with the Gumbel-Softmax trick. Moreover, \cite{rosenbaum2017routing} models the routing process as MDP and employs MARL~\cite{shoham2003multi} to train the routing network.
In contrast to these methods, we propose an approach to address the negative transfer problem in multi-task learning that allows us to learn the tasks simultaneously without the need for specific network design. 

\subsection{Optimization Methods in MTL}
Similar to our work, several prior researchers utilize some optimization methods to address the negative transfer problem in multi-task learning.
A very common solution is to balance the impact of individual tasks on the training of the network by adaptively weighting the task-specific losses or gradients.
There have been some studies developing a set of task-weighting parameters to balance the training procedure.
Uncertainty Weights~\cite{kendall2018multi} devises a weighting method dependent on the homoscedastic uncertainty inherently linked to each task.
These weights for each loss function are trained together with the MTL model parameters. 
GradNorm~\cite{chen2018gradnorm} reduces the task imbalances by weighting task losses so that their gradients are similar in magnitude.
There are several methods dynamically weighting the loss functions of tasks by the learning speed.
~\cite{liu2019end} and ~\cite{liu2019loss} explicitly set a weight to a task loss using a ratio of the current loss to the previous loss.
However, these loss weighting methods do not work well all the time in practice. 
Moreover, the formulation design of the weighing calculation is generally empirical and lacks theoretical derivation. 

There have also been some optimization methods to improve MTL performance by mitigating conflicting gradients.
The problem of conflicting gradients has been previously explored in multi-task learning as well as continual learning.
\cite{du2018adapting} and \cite{dery2021auxiliary} choose to ignore the gradients of auxiliary tasks if the direction is not similar to the main task.
\cite{riemer2018learning} overcomes catastrophic forgetting by maximizing the dot product between task gradients.
MGDA~\cite{desideri2012multiple} employs the condition of the Pareto stationary point for multi-objective optimization. 
It finds a linear combination of gradients that reduces every loss function simultaneously.
%GradDrop~\cite{chen2020just} considers the task conflicts on the sign of each scalar value of gradients and applies a masking procedure to mask out all gradient values of the opposite sign.
%IDGM~\cite{shi2021gradient} adds regularization to the loss function to maximize the inner product between gradients from different tasks.
PCGrad~\cite{yu2020gradient} projects conflicting gradients to each other, which achieves a similar simultaneous descent effect as MGDA.
CAGrad~\cite{liu2021conflict} looks for an update vector that maximizes the worst local improvement of any objective in a neighborhood of the average gradient.
And the performance can shift from GD-like to MGDA-like by a hyper-parameter.
These methods deal with the gradient decent independent of the model structure and can be combined with normal optimizers such as SGD and Adam.
However, the above methods either cannot distinguish the conflicting gradients explicitly or cannot mitigate conflicting gradients completely.

\section{Multi-Task Learning using GDOD}
In this section, to realize effective gradient descent, we present a novel optimization approach that mitigates conflicting gradients across all tasks.
%We start with introducing notation, conflicting gradients in multi-task optimization, then propose a general approach to mitigate it and discuss the differences between GDOD and some related optimization methods. 
%Finally, we theoretically analyze the proposed approach.

\begin{figure}
\centering
\captionsetup{font=bf}
\subfigure[Gradient Descent]{
        \includegraphics[width=82pt, height=88pt, trim=0pt 0pt 0pt 0pt, clip]{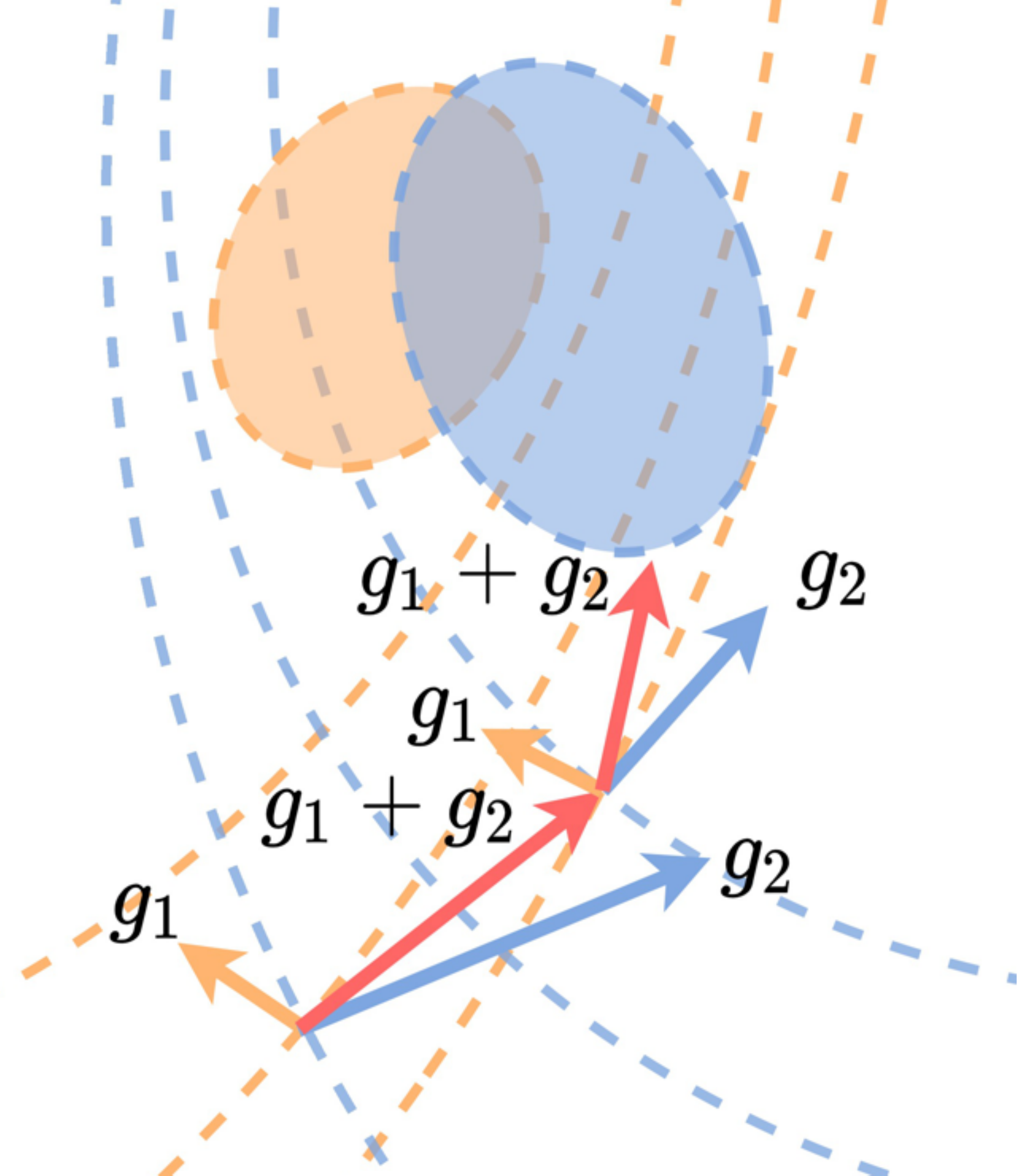}}
% \quad
\subfigure[GDOD]{
    \includegraphics[width=140pt, height=88pt, trim=0pt 5pt 10pt 0pt, clip]{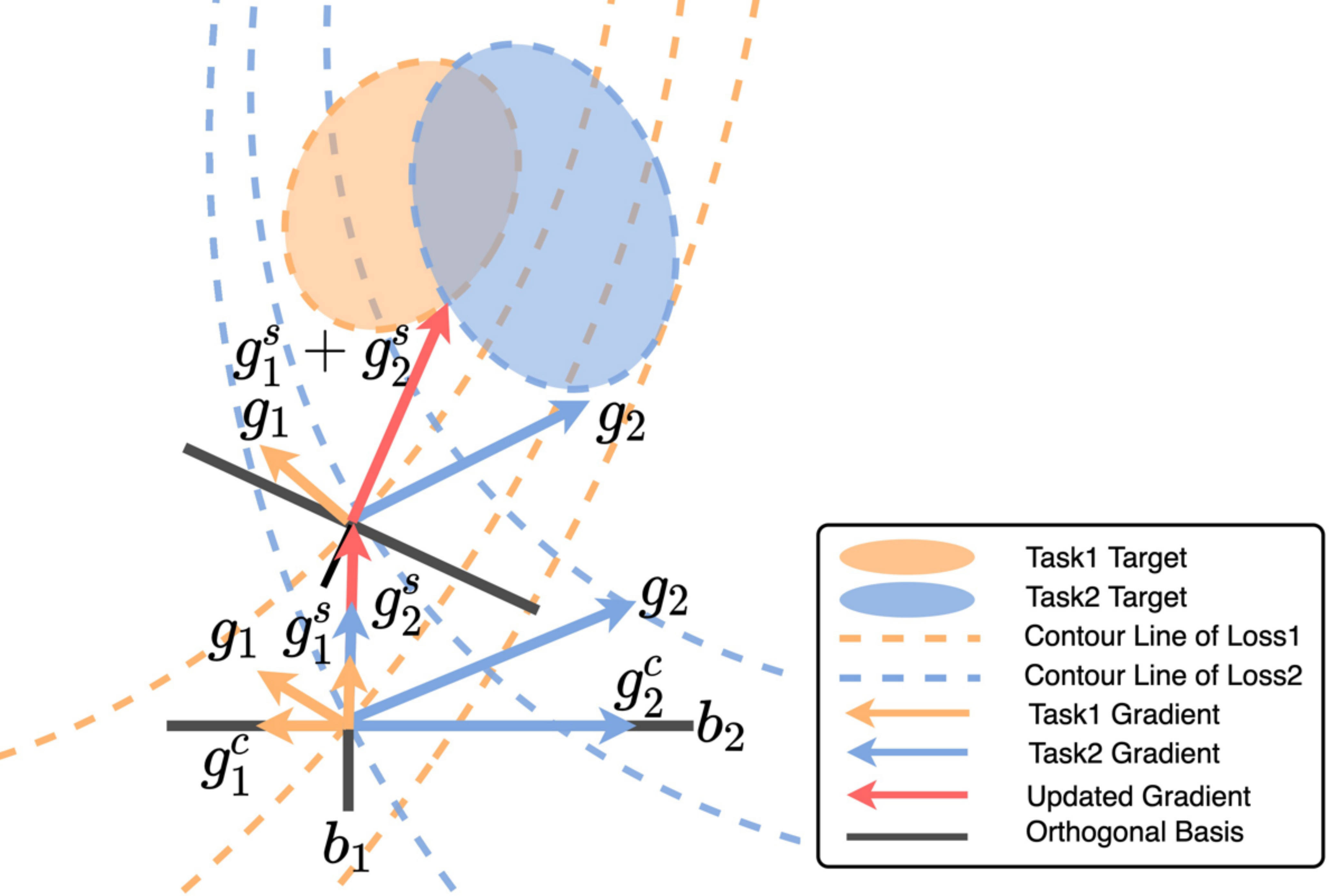}}
\caption{The overview of two optimization methods.
(a) Normal gradient descent.
(b) Gradient manipulation example in the 2-D plane with GDOD.}
\label{fig:overview}
%\vspace{-10pt}
\end{figure}

\subsection{Preliminaries: Notation and Problem}
Consider an MTL model with $K$ different tasks that we want to learn simultaneously.
For simplicity, we assume that they share an input space $\mathcal{X}$ and a collection of task spaces $\{\mathcal{Y}^i\}_{i \in [K]}$.
Each of the inputs in $\mathcal{X}$ is associated with a set of labels for all $K$ tasks, forming a large dataset of i.i.d. data points $\{x_j, y_j^1,...,y_j^K\}_{j\in [N]}$ of $N$ observations.
The MTL model can be divided into two parts: a backbone and multiple heads.
The backbone contains the shared parameters $\theta$ and transforms the input $\mathcal{X}$ into a shared representation.
Next, this representation is fed to each task-specific head to produce output.
We consider the empirical loss for each task $i$ as $\mathcal{L}_i(\theta, \theta_i) = \sum_{j \in N}l_i^j$, where $l_i^j$ is the loss for the $j$-th observation and $\theta_i$ are task-specific head parameters.
We aim to find both the backbone parameters $\theta$ and the task-specific parameters $\theta_i$ to minimize the loss $\mathcal{L}_i$.
More formally, for a set of $K$ tasks, the final goal is to minimize the multi-task loss as:
\begin{equation}
    \mathop{min}\limits_{\theta, \theta_1,...,\theta_K} \sum_{i=1}^{K}\mathcal{L}_i(\theta,\theta_i). %\nonumber
\end{equation}

During training, each task competes for updating backbone parameters $\theta$ to minimize the loss for each individual task, this leads to the occurrence of negative transfer~\cite{ruder2017overview}.
Therefore, we focus on the learning of the backbone parameters $\theta$ in this work.
The task-specific parameters $\theta_i$ are updated by normal gradient descent.
For simplicity, we omit the task-specific parameters $\theta_i$ in loss $\mathcal{L}_i$.
Generally, the gradient of the loss with respect to a particular shared parameter $\theta$ from task $i$ is $g_i=\nabla_{\theta}\mathcal{L}_i(\theta)$.
%$g_i=\nabla_{\theta}\mathcal{L}_i(\theta, \theta_i)$.
Using gradient descent to minimize the multi-task loss, we obtain the update rule for a task-shared parameter $\theta$ as:
\begin{equation}
    \theta = \theta -  \gamma \sum_{i=1}^K \nabla_{\theta}\mathcal{L}_i(\theta). %\nonumber
\end{equation}
The above expression shows that the overall success of an MTL model is dependent on the individual task gradients and their relationship to each other.
However, task gradients might cancel each other.
Or a subset of tasks might dominate the gradients and point towards a direction that does not improve any of the individual tasks.

In this paper, we provide an approach that straightly manipulates gradients to mitigate the conflicting gradient problem in MTL models.
Our approach decomposes each gradient by an orthogonal basis which is a subspace spanned by all
task per-example gradients.
It decomposes the gradients of all the tasks into two components, and only the helpful component is used to update the model parameters.

%Next, we introduce the details of our proposed method.

\subsection{Gradient Descent using Orthogonal Decompotision}
This section introduces a new optimization method to improve generalization for all tasks.
Figure~\ref{fig:overview} shows an illustration of the core idea of GDOD which manipulates each task gradients to maximize their usefulness to all tasks.
To ensure the convergence and stability of the optimization process, we modify the gradients for each task so as to minimize negative conflict with other task gradients.
Exactly, we decompose each task gradients into two components: 1) task-shared component which is helpful for all tasks; 
%2) task-specific component which have no impact to other tasks but is effective for its own task; 
and 2) task-conflict component which interferes with other tasks.
Then, GDOD only utilizes the task-shared components from all tasks to update model parameters.

At each training step, suppose there are $m$ samples in a mini-batch of data. 
GDOD first collects the gradients of the losses with respect to $\theta$ for individual examples from all tasks as 
\begin{equation}
   \mathcal{M}^T = \{\{\nabla_{\theta}\mathcal{L}_1^j\}^T,...,\{\nabla_{\theta}\mathcal{L}_K^j\}^T\}_{\forall j \in [m]}, %\forall j \in \mathbf{b} . %\nonumber
\end{equation}
where $\nabla_{\theta} \mathcal{L}_i^j \in \mathbb{R}^{m \times D}$ and $D$ is the dimension of the model parameters.
Then, we define a subspace $\mathcal{S}$ by the span of the gradient vectors in $\mathcal{M}^T$.
Any linear combination of each task gradients lies in this subspace, e.g., $g_i = \mathbb{E}(\nabla_{\theta}\mathcal{L}_i^j) \in \mathcal{S}$.
Note that, the dimension of the subspace $\mathcal{S}$ is $r$, which is the rank of the matrix $\mathcal{M} \in \mathbb{R}^{r \times D}(r \leq mK <<D)$.
%and is upper-bounded by the number of examples $mK$.
%So $\mathcal{M} \in \mathbb{R}^{r \times D}$($r \leq mK <<D$).
Third, we project each task gradients onto $\mathcal{S}$.
This allows distinguishing between the directions of each task update which helps other task losses and those which have a negative impact.
%This allows the direction of each decomposed task gradients to be divided into two parts that are helpful or harmful to other tasks.
Consequently, we can decompose each task gradients as 
\begin{equation}
    g_i = g_i^{sh} + g_i^{con}, %\nonumber
\end{equation}
%$g_i = g_i^{sh} + g_i^{con}$, 
where $g_i^{sh} \in \mathcal{S}$ is the portion of the gradient that improves all task results and $g_i^{con} \in \mathcal{S}$ is the portion of the gradient that damages some task results.

% add by wr
In order to decompose each task gradient, we define an orthogonal basis for subspace $\mathcal{S}$ as $\{b_u\}_{u \in [r]}$.
On the orthogonal basis, we can measure whether the components of each gradient is agree or disagree with each other.
It is said that the two gradients agree along $b_u$ if and only if $sign(g_i^T \cdot b_u) = sign(g_j^T \cdot b_u)$ for task $i$ and $j$.
This mean that the gradient components $\{g_i^{sh}\}_{i \in [K]}$ refer to the projections of $\{g_i\}_{i \in [K]}$ onto the basis vectors where all task gradients agree.
On the contrary, $\{g_i^{con}\}_{i \in [K]}$ refer to the gradient components where the directions are disagree.
By this decomposition, $g_i^{sh}$ helps for all task, i.e., $(g_i^{sh})^T \cdot g_j \textgreater 0$ for any other task $j$, while $g_i^{con}$ interferes with some tasks, i.e., $(g_i^{con})^T \cdot g_k \textless 0$ for task $k$.

The remaining problem is how to select an orthogonal basis for the span of all task gradients.
There are multiple methods, such as Schmidt Decomposition, singular vector decomposition (SVD) and randomized approximate matrix decomposition~\cite{halko2011finding}, to obtain the basis $\{b_u\}_{u \in [r]}$ at each training time step.
The method is critical since the two components of $g_i^{sh}$ and $g_i^{con}$ are helpful or harmful depending on how they agree with the projection of all task gradients onto this basis.

As some work~\cite{mirzasoleiman2020coresets} proved that along the directions associated with singular vectors of a neural network Jacobian can generalize well.
In this paper, the subspace $\mathcal{S}$ is constructed from a mini-batch of all task gradients.
In brief, we select the singular vectors of the matrix of $\mathcal{M}$ as the basis.
We also compare the impact of different decomposition methods mentioned above in Section~\ref{sec:decom compa}.

\begin{algorithm}
\caption{GDOD Update At Each Training Step}
\label{alg:ogd}
\begin{algorithmic}[1]
\REQUIRE $\theta$, $\gamma$: model parameters shared with all tasks, learning rate
%\REQUIRE $\mathcal{M}^T = \{\{\nabla_{\theta}\mathcal{L}_1^j\}^T,...,\{\nabla_{\theta}\mathcal{L}_K^j\}^T\}, \forall j \in \{1,2,..,m \}  \}^T$: gradients with respect to $\theta$ for each task 
\REQUIRE $\mathcal{M}^T = \{\{\nabla_{\theta}\mathcal{L}_1^j\}^T,...,\{\nabla_{\theta}\mathcal{L}_K^j\}^T\}_{\forall j \in [m]}$: gradients with respect to $\theta$ for each task
\STATE $g_i = \frac{1}{m}\sum_{j=1}^m \nabla_{\theta}\mathcal{L}_i^j, \forall i \in [K]$ %\{1,2,...,K\} , g_i \in \mathbb{R}^{1*D}$
%\STATE $C \sim \mathcal{N}(0, I)$
%\STATE $R = C\mathcal{M}$
%\STATE $B \leftarrow Gram\_Schmidt\_Pro(R)$
\STATE $B \leftarrow SVD\_Pro(\mathcal{M})$ %, B \in \mathbb{R}^{r \times D}$
\STATE $p_i = B(g_i)^T, \forall i \in [K]$
\STATE $p_i^{sh}, p_i^{con} = \left(1_{[p_1\odot...\odot p_K \geq 0]} \right) \odot p_i, \left(1_{[p_1\odot...\odot p_K \textless 0]} \right) \odot p_i, \forall i \in [K]$ // \textit{$\odot$ is the hadamard product operator}
\STATE $g_i^{sh}, g_i^{con} = (p_i^{sh})^TB, (p_i^{con})^TB, \forall i \in [K]$
%\STATE $g = \frac{1}{T}\sum_{t=1}^Tg_t^s$
\STATE \textbf{return} update $\theta = \theta - \gamma \sum_{i=1}^Kg_i^{sh}$
\end{algorithmic}
\end{algorithm}

We summarize the full update procedure in Algorithm~\ref{alg:ogd}.
Suppose the gradient for task $i$ is $g_i$.
GDOD proceeds as follows:
At each training step, it first obtains the orthogonal basis $B$ from the span of all task gradients $\mathcal{M}$ by SVD.
The $SVD\_Pro$ is the procedure of SVD and $B \in \mathbb{R}^{r \times D}$ is the non-zero and right-singular vectors of $\mathcal{M}$.
Secondly, GDOD decomposes each task gradient on the orthogonal basis $B$.
Thirdly, it obtains the helpful component $g_i^{sh}$ and harmful component $g_i^{con}$ for each task gradient respectively. 
Finally, it utilizes the helpful components from all tasks to update model parameters.

Algorithm~\ref{alg:ogd} shows that this procedure is simple to implement and ensures that the modified gradients we update for each task have no conflict with the other tasks in each training step.
Hence, GDOD mitigates the conflicting gradient problem in MTL models. 
Moreover, to reduce the computational complexity, the rank of the matrix $\mathcal{M}$ can be reduced by grouping gradients.
For example, the gradients are divided into different groups and an average pooling is performed on the gradients in the same group.
We also examine the impact of different dimensions of subspace $\mathcal{S}$ in Section~\ref{sec:size of S}.
Furthermore, by replacing the original gradients $\sum_{i=1}^Kg_i$ with the task-shared gradients $\sum_{i=1}^Kg_i^{sh}$, GDOD can be combined with any other gradient-based optimizer, such as SGD with momentum and Adam.
%Our experimental results verify that GDOD can reduce the problem of conflicting gradients and improve the performance of MTL models in Section~\ref{sec:gdod with mtl}.

\subsection{Discussion}
Several gradient-based approaches have been proposed to manipulate each task gradients to obtain a new update vector and have shown improved performance on existing MTL models.
MGDA~\cite{desideri2012multiple} finds a linear combination of gradients that reduces
every loss function simultaneously.
It proposes to minimize the following combination of task gradients:  
\begin{align}
    {\rm min} \frac{1}{2}\Vert \sum_{i=1}^K w_ig_i \Vert^2, \quad s.t. \sum_{i=1}^K w_i=1 \quad {\rm and} \quad \forall i, \, w_i \geq 0. \label{equ:mgda}
\end{align}
From equation~\ref{equ:mgda}, MGDA seeks the linear combination of gradients that results in the smallest norm. 
Tasks that have larger gradients will become attenuated by MGDA.
For example, if an MTL model has two tasks where task 1 is under-optimized and task 2 is near a local optimum.
The model has a large gradient for task 1 and a relative small gradient for task 2.
%For example, if a MTL model with two tasks where task 1 is under-optimized and the model has a large gradient for task 1 but the model is near a local optimum of task 2. 
In this situation, even if it is possible to improve task 1 a lot while not affecting the performance of task 2, MGDA may not take that move because the best improvement on task 1 is bounded by its improvement on task 2. 
This often causes slow improvement of MGDA in practice.
%However, GDOD obtains the updated vector which is proved to converge to a minimum of the total loss.

Moreover, PCGrad~\cite{yu2020gradient} projects conflicting gradients to the orthogonal direction of each other.
It sets a universal gradient similarity objective of zero for any two tasks explicitly. 
Consequently, if $g_i \cdot g_j \textless 0$, PCGrad projects these conflicting gradients to each other.
It replace $g_i$ ($g_j$) by its projection onto the normal plane of $g_j$ ($g_i$):
\begin{equation}
    g_i = g_i - \frac{g_i\cdot g_j}{||g_j||^2}g_j.
\end{equation}
It is not effective for the case of positive gradient similarities with $g_i \cdot g_j \geq 0$.
In fact, the two tasks share positive cosine similarities such that the precondition for PCGrad would never be satisfied. 
However, GDOD alters gradients more preemptively under both positive and negative cases, taking more proactive measurements in updating the gradient.

CAGrad~\cite{liu2021conflict} finds a linear combination of a new updated vector $d$ that is a linear combination of the original individual task gradients. 
It obtains the updated vector $d$ by solving the following optimization problem:
\begin{equation}
    \mathop{\rm max}\limits_{d \in R}\, \mathop{\rm min}\limits_{i \in [K]}<g_i, d>, \: s.t. \, \Vert d - g_0 \Vert \leq c \Vert g_0 \Vert.
\end{equation}
The difference between MGDA and CAGrad is that the new updated vector $d$ is searched around the $0$ vector for MGDA and $g_0$ (average gradient vector) for CAGrad. 
CAGrad chooses the average loss over all tasks as the main objective. 
Nevertheless, we find that CAGrad is not robust with different task weights in Section~\ref{sec:uncer task weight}.
For our method, we find an updated vector guided by the singular vectors of the Jacobian matrix. 
As some works~\cite{mirzasoleiman2020coresets} point out that using the principal vectors as directions of descent instead of the mean induces a more robust algorithm since the mini-batch average gradient is susceptible to outliers and skews from replicated data points.

% \newpage
%\subsection{Convergence rate...}
\subsection{Theoretical Analysis of GDOD}
In this section, we analyze the convergence of GDOD with the following theorem.
\begin{theorem}
%Assume $\mathcal{L}(\theta)$ is convex and differentiable.
Let $\mathcal{L}(\theta^t)$ represents the full batch losses of all $K$ tasks at training step $t$.
Suppose the gradients $\{g_i\}_{i \in [K]}$ of all $K$ tasks are Lipschitz continuous with $L > 0$. 
%Suppose the gradient $g_i$ for each task is Lipschitz continuous with $L > 0$.
Then, the GDOD update rule $\theta^{t+1} = \theta^t - \gamma \sum_{i=1}^Kg_i^{sh}$ with learning rate $\gamma \leq \frac{1}{L}$ will converge to either (1) the optimal value if $\mathcal{L}(\theta)$ is convex or (2) a stationary point if $\mathcal{L}(\theta)$ is non-convex.
\end{theorem}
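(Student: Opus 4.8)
The plan is to follow the standard descent-lemma argument for smooth functions, but applied to the total loss $\mathcal{L}(\theta) = \sum_{i=1}^K \mathcal{L}_i(\theta)$ and to the \emph{modified} update direction $d^t := \sum_{i=1}^K g_i^{sh}$ rather than the true gradient $\nabla\mathcal{L}(\theta^t) = \sum_{i=1}^K g_i$. First I would record that, since each $g_i$ is $L$-Lipschitz, the sum $\nabla\mathcal{L}$ is $KL$-Lipschitz (or, after absorbing constants, $L$-Lipschitz as the statement allows), so the descent lemma gives
\begin{equation}
\mathcal{L}(\theta^{t+1}) \le \mathcal{L}(\theta^t) - \gamma \,\langle \nabla\mathcal{L}(\theta^t),\, d^t\rangle + \frac{L\gamma^2}{2}\,\|d^t\|^2.
\end{equation}
Everything then hinges on lower-bounding the inner-product term $\langle \nabla\mathcal{L}(\theta^t), d^t\rangle = \sum_i \langle g_i, d^t\rangle$ and relating $\|d^t\|$ to it.

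The key structural fact to exploit is the orthogonal decomposition. Writing each $g_i = g_i^{sh} + g_i^{con}$ in the orthonormal basis $\{b_u\}_{u\in[r]}$ with coordinates $p_i = B g_i^T$, the shared part keeps exactly the coordinates $u$ where $\mathrm{sign}((p_1)_u) = \cdots = \mathrm{sign}((p_K)_u)$. Hence for every such surviving coordinate $u$, all the $(p_i)_u$ have the same sign, so $\sum_i (p_i^{sh})_u$ has that same sign and $\left(\sum_i (p_i)_u\right)\cdot\left(\sum_i(p_i^{sh})_u\right) = \left|\sum_i (p_i)_u\right|\cdot\left|\sum_i (p_i^{sh})_u\right| \ge \left(\sum_i (p_i^{sh})_u\right)^2$; for coordinates that do not survive, the $g_i^{sh}$ contribution is zero. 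Because $\nabla\mathcal{L}(\theta^t)$ lies in $\mathcal{S}$ (it is a linear combination of the $g_i$, each in $\mathcal{S}$) and $B$ is orthonormal on $\mathcal{S}$, I can evaluate the inner product coordinate-wise in the basis and conclude
\begin{equation}
\langle \nabla\mathcal{L}(\theta^t),\, d^t\rangle \;=\; \sum_{u=1}^r \Big(\textstyle\sum_i (p_i)_u\Big)\Big(\textstyle\sum_i (p_i^{sh})_u\Big) \;\ge\; \sum_{u=1}^r \Big(\textstyle\sum_i (p_i^{sh})_u\Big)^2 \;=\; \|d^t\|^2 \;\ge\; 0.
\end{equation}
This is the crucial inequality: the modified direction $d^t$ is a nonnegative-correlation descent direction for the total loss, and moreover $\langle\nabla\mathcal{L},d^t\rangle \ge \|d^t\|^2$.

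Plugging this into the descent lemma, with $\gamma \le 1/L$ I get
\begin{equation}
\mathcal{L}(\theta^{t+1}) \le \mathcal{L}(\theta^t) - \gamma\|d^t\|^2 + \frac{L\gamma^2}{2}\|d^t\|^2 \le \mathcal{L}(\theta^t) - \frac{\gamma}{2}\|d^t\|^2,
\end{equation}
so $\mathcal{L}(\theta^t)$ is non-increasing and $\sum_{t} \|d^t\|^2 \le \tfrac{2}{\gamma}\big(\mathcal{L}(\theta^0) - \mathcal{L}^\star\big) < \infty$, forcing $\|d^t\|\to 0$. For the non-convex case I would then argue that $d^t\to 0$ implies every $g_i^{sh}\to 0$; and since the ``conflict'' coordinates of $\nabla\mathcal{L}$ are precisely those where the signs of the $(p_i)_u$ disagree, at a limit point one checks that the disagreeing coordinates must also vanish in the summed gradient (or, more carefully, that $\sum_i g_i$ has no descent direction that improves all tasks), yielding a stationary point of $\mathcal{L}$. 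For the convex case, the standard telescoping-plus-convexity inequality $\mathcal{L}(\theta^t) - \mathcal{L}^\star \le \langle \nabla\mathcal{L}(\theta^t), \theta^t - \theta^\star\rangle$ combined with the per-step decrease and $\langle\nabla\mathcal{L},d^t\rangle\ge\|d^t\|^2$ gives convergence to the optimum.

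The main obstacle I anticipate is the step connecting ``$d^t \to 0$'' to ``$\theta^t$ is a stationary point of $\mathcal{L}$'' in the non-convex case: $d^t$ discards the conflict components, so a priori $\nabla\mathcal{L}(\theta^t) = d^t + \sum_i g_i^{con}$ need not go to zero just because $d^t$ does. The argument must use that once all the shared coordinates vanish, any remaining (conflicting) coordinate of $\sum_i g_i$ corresponds to a direction along which at least one task's loss strictly increases while being a descent direction for $\mathcal{L}$ only in a way already ``cancelled'' — so the natural notion of stationarity here is Pareto-stationarity / a fixed point of the GDOD map, and I would state the theorem's conclusion in that sense. Handling this cleanly — rather than claiming $\nabla\mathcal{L}(\theta^t)\to 0$ outright — is the delicate part of the proof.
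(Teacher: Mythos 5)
Your proposal is correct and follows essentially the same route as the paper: the descent lemma applied to the modified update, plus the fact that shared and conflict components occupy disjoint sets of basis coordinates — which the paper states as $(g_i^{con})^T g_j^{sh}=0$ and which makes your bound $\langle\nabla\mathcal{L}(\theta^t), d^t\rangle \ge \|d^t\|^2$ actually hold with equality — followed by the same $\gamma \le \frac{1}{L}$ per-step decrease and telescoping to obtain the $\mathcal{O}(\frac{1}{T})$ rate. The only real difference is candor: the paper simply identifies $\Vert g^{sh}_t\Vert \to 0$ with reaching a stationary point and asserts the convex case without detail, whereas you explicitly flag that $d^t \to 0$ need not force $\nabla\mathcal{L}(\theta^t)\to 0$ (so stationarity should be read as a fixed point of the GDOD map), a gap the paper's own proof leaves unaddressed as well.
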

\begin{proof}
% According the order-1 Taylor expansion, we obtain the following inequality:
According to the Lipschitz smoothness assumption, we obtain the following inequality:
\begin{align}
    \mathcal{L}(\theta^{t+1}) & \leq \mathcal{L}(\theta^t) + \nabla_{\theta}\mathcal{L}(\theta^t)^T(\Delta\theta)
    + \frac{1}{2} L \Vert \Delta\theta \Vert_2^2 \nonumber 
 %   & \leq \mathcal{L}(\theta^t) + \nabla_{\theta}\mathcal{L}(\theta^t)^T(\Delta\theta)
    \nonumber
\end{align}
Now, we can plug in the GDOD update by replacing $\Delta\theta = \theta^{t+1} - \theta^t = - \gamma \sum_{i=1}^Kg_i^{sh}$. 
We then obtain:
\begin{align}
\mathcal{L}(\theta^{t+1}) & \leq \mathcal{L}(\theta^t) + \gamma (\sum_{i=1}^K g_i)^T(-\sum_{i=1}^K g_i^{sh}) + \frac{1}{2} L \Vert \gamma \sum_{i=1}^K g_i^{sh} \Vert_2^2 \nonumber \\
    & = \mathcal{L}(\theta^t) + \gamma (\sum_{i=1}^K g_i^{sh} + \sum_{i=1}^K g_i^{con})^T(-\sum_{i=1}^K {g}_i^{sh}) \nonumber \\
    & \qquad + \frac{1}{2} L\gamma^2 \Vert \sum_{i=1}^K {g}_i^{sh} \Vert_2^2 \nonumber \\
    & = \mathcal{L}(\theta^t) - \gamma (\sum_{i=1}^K {g}_i^{sh})^2 - \gamma (\sum_{i=1}^K {g}_i^{con})^T (\sum_{i=1}^K {g}_i^{sh}) \nonumber \\
    & \qquad + \frac{1}{2} L\gamma^2 \Vert \sum_{i=1}^K {g}_i^{sh} \Vert_2^2 \label{equ:aa}  \\
    & = \mathcal{L}(\theta^t) - \gamma (\sum_{i=1}^K {g}_i^{sh})^2 + \frac{1}{2} L\gamma^2 \Vert \sum_{i=1}^K {g}_i^{sh} \Vert_2^2 \label{equ:bb}  \\
    & = \mathcal{L}(\theta^t) - (1 - \frac{1}{2} L\gamma)\gamma\Vert \sum_{i=1}^K {g}_i^{sh} \Vert_2^2 \nonumber
%    & \leq \mathcal{L}(\theta^t). \nonumber
\end{align}
Note that in going equation~\ref{equ:aa} to~\ref{equ:bb} in the above proof, we use the fact that $(g_i^{con})^Tg_j^{sh} = 0$ for any two tasks $i$ and $j$ due to orthogonality.
We define the updated gradient at training step $t$ is $g_t^{sh} = \sum_{i=1}^K {g}_i^{sh}$.
Using $\gamma \leq \frac{1}{L}$, we know that 
\begin{equation}
    -(1 - \frac{1}{2} L\gamma) = \frac{1}{2} L\gamma -1 \leq \frac{1}{2}L(\frac{1}{L}) -1 = -\frac{1}{2}. \nonumber
\end{equation}
Plugging this into the last expression above, we can conclude the following:
\begin{align}
% \mathcal{L}(\theta^{t+1}) & \leq \mathcal{L}(\theta^t) - \frac{1}{2}\gamma\Vert \sum_{i=1}^K \boldsymbol{g}_i^{sh} \Vert_2^2 \label{equ:cc} \\
%     & \leq \mathcal{L}(\theta^t) \nonumber
\mathcal{L}(\theta^{t+1}) & \leq \mathcal{L}(\theta^t) - \frac{1}{2}\gamma\Vert {g}_t^{sh} \Vert_2^2 \label{equ:cc} \\
    & \leq \mathcal{L}(\theta^t) \nonumber
\end{align}
Thus, the above theorem ensures that GDOD is minimizing $\mathcal{L}(\theta^t)$.
If $\mathcal{L}(\theta)$ is convex and differentiable, hence repeatedly applying GDOD process can reach the optimal value.
%when $\mathcal{L}$ is convex.
%We can combine this with the constraint on $\gamma \leq \frac{1}{L}$ to derive convergence guarantees after $T$ steps as in optimization literature.

Assume $\mathcal{L}(\theta)$ is non-convex, using telescope sum to equation~\ref{equ:cc}, we have
\begin{align}
    \mathcal{L}(\theta^T) - \mathcal{L}(\theta^0) \leq - \frac{1}{2}\gamma\sum_{t=0}^{T-1} \Vert {g}_t^{sh} \Vert_2^2
\end{align}
Thus, we have:
\begin{align}
& \min_{0\leq{t}\leq{T}}{\Vert {g}^{sh}_t \Vert_2^2 }  \leq \frac{1}{T} \sum_{t=0}^{T-1} \Vert {g}^{sh}_t \Vert^2_2   \nonumber\\
%& \leq \frac{ \sum_{t=0}^{T-1}  \mathcal{L}(\theta^t) - L(\theta^{t+1})}{T(1-\frac{1}{2}\gamma)}    
&\leq \frac{2(\mathcal{L}(\theta^0) - \mathcal{L}(\theta^T))}{T\gamma}  \nonumber \\
& \leq \frac{2(\mathcal{L}(\theta^0) - \mathcal{L}^{*})}{T\gamma}
%    & \leq \mathcal{L}(\theta^t). \nonumber
\end{align}
where $\mathcal{L}^{*}$ is the minimal function value.
%Therefore, the updated gradients $g_t^{sh}$ can converge to a stationary point in $\mathcal{O}(\frac{1}{T})$ steps.
Therefore, GDOD updating with gradients $g_t^{sh}$ can converge to a stationary point in $\mathcal{O}(\frac{1}{T})$ steps.
\end{proof}

Therefore, we prove GDOD converges to either the optimal value if $\mathcal{L}(\theta)$ is convex or a stationary point if $\mathcal{L}(\theta)$ is non-convex.

\section{Experiments}
In this section, we evaluate the performance and effectiveness of GDOD with four multi-task datasets from different domains. 
We first evaluate the performance of GDOD as well as several state-of-the-art optimization methods.
Then, we verify that GDOD is model-agnostic and can improve performance for any MTL models with shared parameters.
Finally, we present ablation experiments to explain the impact of hyper-parameter selection.

\begin{table}[htb]
%  \small
%  \setlength{\abovecaptionskip}{-0.5pt}
  \centering
  \captionsetup{font=bf}
  \caption{The statistics of the four datasets.}
\renewcommand\arraystretch{1.0}
\begin{tabular}{ c | c | c | c | c}
\toprule
 \textbf{Dataset} & \textbf{Phase} &\textbf{Users} & \textbf{Items} & \textbf{Samples}\\
\toprule\toprule
 \multirow{2}{*}{BookCrossing} & Train & 92,792 & 239,029 & 919,824\\
   & Test & 42,194 & 99,404 & 229,956\\
\cline{1-5}
 \multirow{2}{*}{IJCAI-15}  & Train & 237,295 & 274,709 & 2,142,528 \\
 & Test & 106,023 & 127,772  & 544,025 \\
\cline{1-5}
 \multirow{2}{*}{Alipay Advertising} & Train & 7,579,571 & 1,098 & 14,298,291 \\
 & Test & 5,822,077 & 835 & 10,740,289 \\ 
\cline{1-5}
 \multirow{2}{*}{Census-Income} & Train & - & - & 199,523\\
 & Test & - & - & 99,762\\
\toprule
\end{tabular}
\label{tab: datasets}
\end{table}

\subsection{Datasets and Settings}
\subsubsection{Datasets}
We use three public multi-task datasets from different domains and a large-scale real-world advertising dataset to verify the effectiveness of GDOD. 
The statistics of the datasets are listed in Table~\ref{tab: datasets}.
These datasets are described as follows:

\begin{itemize}
\item \textbf{BookCrossing Dataset}~\cite{bookcrossing} collects user ratings in the Book-Crossing community. 
 It includes 278,858 users who provide 1,157,112 ratings about 271,379 books.
As suggested in the original paper \cite{bookcrossing}, we define the following two related prediction tasks based on this dataset: 1) predict whether a user has rated a book; 2) predict whether a rating score from a user on a book is higher than or equal to 9.
% \item \textbf{BookCrossing Dataset}~\cite{bookcrossing} has two tasks to learn: 1) whether a user has rated a book; 2) whether a rating score from a user on a book is higher than 9 (including).
\item \textbf{IJCAI-15 Dataset}~\cite{ijcai} is collected from the E-commerce website Tmall.com.
It is a public dataset used in the IJCAI2015 repeat buyers prediction competition hosted by Alibaba Group.
It contains 241,093 users with 2,295,706 instances on 237,564 items.
% It contains profiles of 241,093 users and 237,564 items and 2295706 instance collected from logs of a E-commerce shopping website Tmall.com.
We model two related prediction tasks involving CVR (Conversion Rate): 1) predict whether a user adds an item to his favourites after clicking; 2) predict whether a user buys an item after adding it to his favourites.
\item \textbf{Alipay Advertising Dataset} is collected over three months from user traffic logs of a commercial advertising system in the Alipay App. 
It contains 7,630,003 users who produce 25,038,580 samples about 1,120 advertisements. 
One CTR task to predict whether a user clicks an item and two CVR tasks similar with IJCAI-15 dataset are modeled.

\item \textbf{Census-Income(KDD) Dataset}~\cite{census_kdd} is a dataset extracted from the 1994 census database.
%It contains 42 demographic and employment related fields that provides more features than the original UCI/ADULT database~\cite{census_adault}. 
It contains 199,523 instances with 42 demographic and employment related features.
Given a person, we model six related prediction tasks based on this dataset contains: 1) predict whether the person's income exceeds \$50K; 2) predict whether the person’s marital status has never married; 3) predict whether the person's education level is at least college; 4) predict whether the person's employment status is full time; 5) predict whether the person's gender is male; and 6) predict whether the person's race is white.
\end{itemize}

\subsubsection{Comparative Optimization Methods} 
We compare GDOD with seven SOTA optimization methods.
\begin{itemize}
    \item \textbf{Adam} is used as the baseline to compute the performance gains of other methods.
    \item \textbf{Uncertainty Weights (Uncert)}~\cite{kendall2018multi} uses a joint likelihood formulation to derive task weights based on the intrinsic uncertainty in each task.
    \item \textbf{GradNorm}~\cite{chen2018gradnorm} 
    %modifies gradient magnitudes to ensure that tasks train at the approximately same rate.
    reduces the task imbalances by weighting task losses so that their gradients are similar in magnitude.
    \item \textbf{MGDA}~\cite{desideri2012multiple} applies a multiple-gradient descent algorithm for MTL. 
    It finds a linear combination of gradients that reduces every task loss simultaneously.
    \item \textbf{Gradient Regularization (GradReg)}~\cite{suteu2019regularizing} proposes a gradient regularization term that minimizes task interference by enforcing near orthogonal gradients.
    \item \textbf{PCGrad}~\cite{yu2020gradient} projects conflicting gradients to the orthogonal direction of each other, so that achieving a similar simultaneous descent effect.
    \item \textbf{CAGrad}~\cite{liu2021conflict} looks for an update gradient vector in the neighborhood of the average gradient that minimizes the average loss and leverages the worst local improvement of individual tasks.
\end{itemize}

\subsubsection{Baseline MTL models}  
We evaluate the effect of our GDOD with the following representative MTL models.
\begin{itemize}
    \item \textbf{Shared-Bottom}~\cite{caruana1997multitask}. Shared-Bottom shares the embedding layers and a low-level feature extraction layer (MLP) for all tasks and each task has its own task-specific high-level layers built on top of the shared layers.
    \item \textbf{Cross-Stitch}~\cite{misra2016cross}. It fuses the tower layers of tasks by linear transformation based on the Shared-Bottom model. 
    \item \textbf{MMOE}~\cite{ma2018modeling}. MMOE transforms the shared low-level layers into sub-networks and uses different gating networks for tasks to utilize different sub-networks. 
    \item \textbf{SNR}~\cite{ma2019snr}. SNR modularizes the shared low-level layers into parallel sub-networks and uses a transformation matrix multiplied by a scalar coding variable to learn their connections. 
    \item \textbf{PLE}~\cite{tang2020progressive}. PLE separates shared components and task-specific components and adopts a progressive routing mechanism to achieve more effective information sharing.
\end{itemize}

\subsubsection{Evaluation Metrics }
For fair comparisons, we employ \emph{AUC} and \emph{Logloss} as our evaluation metrics.
\begin{itemize}
\item \textbf{AUC} is the Area Under the ROC Curve over the test set.
It measures the goodness of order by ranking all the items with predicted CTR in the test set.
It is noticeable that a slightly higher AUC at 0.001-level is regarded as significant for CTR/CVR prediction tasks, which has been pointed out in existing works~\cite{cheng2016wide,guo2017deepfm,misra2016cross}.
Note that, the larger AUC shows better performance.
\item \textbf{Logloss} is the loss value on the test set.
The smaller Logloss means better performance.
\end{itemize}

\subsubsection{Implementation Details}
For all the baseline MTL models, there are trained by the Adam optimizer~\cite{kingma2014adam} with an initial learning rate of 1e-3. 
The mini-batch size is fixed to 256.
The embedding size of each sparse feature is set to 8. 
The hidden sizes of the two shared hidden layers in shared-bottom model are [256, 32].
The number of sub-networks/experts in SNR, PLE and MMoE is set to 8 and the hidden size of each sub-network/expert is 32.
There are two specific tower hidden layers with the size of [16, 1] for each task.
% The L2 regularization for model parameters are tuned from \{1e-1, 1e-2, 1e-3\}. 

Moreover, the weights in GradReg is tuned in [1e-1, 1e-2, 1e-3, 1e-4].
For GradNorm, the hyper-parameter $\alpha$ is tuned in [0.5, 1.5].
The hyper-parameter $c$ in CAGrad is tuned in [0.1, 0.3, 0.5, 0.7, 0.9]. 
All hyper-parameters are settled with the best performance on each dataset.
For GDOD, the training examples are divided into $16$ groups in a mini-batch at each training step. 
%The sum gradient is calculated in every group. 
For PCGrad, CAGrad and GDOD, they are combined with Adam by passing the computed update to replace the original gradient.
We repeat all experiments 5 times and report the averaged results.

\begin{table*}[htb]
% \small
%  \setlength{\abovecaptionskip}{-1pt}
  \centering
  \captionsetup{font=bf}
  \caption{Performance comparisons of different optimization methods. 
  The baseline MTL model is Shared-Bottom.
  \emph{Gain} measures the AUC improvement between Adam with other optimization methods.
  %Uncert. As mentioned in \textbf{Compared Optimization Methods}, Uncert. Weights is the uncertainty loss weighting method and Reg. is the regularizing using orthogonal gradients methods
  }
  \label{tab:commparison optimazation}
\renewcommand\arraystretch{1.0}
% \resizebox{\textwidth}{17mm}{
\resizebox{507pt}{59pt}{
\begin{tabular}{ c|c c | c c|c c | c c|c c | c c | c c }
 \toprule
 \multicolumn{1}{c}{\multirow{3}{*}{\textbf{Optimization Method}}}  &  \multicolumn{4}{|c|}{\textbf{BookCrossing} } & \multicolumn{4}{c|}{\textbf{IJCAI-15}} & \multicolumn{6}{c}{\textbf{Alipay Advertising}}  \\
 \cline{2-15}
  ~ & \multicolumn{2}{|c|}{Task1} & \multicolumn{2}{c}{Task2} & \multicolumn{2}{|c|}{Task1} & \multicolumn{2}{c}{Task2} & \multicolumn{2}{|c|}{Task1} & \multicolumn{2}{c}{Task2} & \multicolumn{2}{|c}{Task3} \\
 & AUC & Gain & AUC & Gain & AUC & Gain & AUC & Gain & AUC & Gain & AUC & Gain & AUC & Gain \\
 \hline
 \hline
%  \multirow{6}{*}{Shared-Bottom} 
Adam & 0.7838  & - & 0.7633  & - & 0.6968  & - & 0.7451  & - & 0.7505  & - & 0.7387  & - & 0.8237  & -  \\
Uncert & 0.7814  & -0.0024  & 0.7663  & 0.0030  & 0.6631  & -0.0337  & 0.7327  & -0.0124  & 0.7599  & 0.0094  & 0.7475  & 0.0088  & 0.8264  & 0.0027   \\
GradReg & 0.7696  & -0.0142  & 0.7466  & -0.0167  & 0.6775  & -0.0193  & 0.7321  & -0.0130  & 0.7635  & 0.0130  & 0.7513  & 0.0126  & 0.8273  & 0.0036   \\
GradNorm & 0.7857  & 0.0019  & 0.7677  & 0.0044  & 0.7125  & 0.0157  & 0.7477  & 0.0026  & 0.7649  & 0.0144  & 0.7505  & 0.0118  & 0.8238  & 0.0001   \\
MGDA & 0.7811 & -0.0027 & 0.7593 & -0.0040 & 0.7122 & 0.0154 & 0.7548 & 0.0097 & 0.7600 & 0.0095 & 0.7440 & 0.0053 & 0.8300 & 0.0063 \\
PCGrad & 0.7912 & 0.0074 & 0.7753 & 0.0120 & 0.7188 & 0.0220 & 0.7524 & 0.0073 & 0.7630 & 0.0125 & 0.7431 & 0.0044 & 0.8335 & 0.0098  \\
CAGrad & 0.7900 & 0.0062 & 0.7733 & 0.0100 & 0.7204 & 0.0236 & 0.7542 & 0.0091 & 0.7699 & 0.0194 & 0.7456 & 0.0069 & 0.8384 & 0.0147  \\
GDOD & 0.7922  & \textbf{0.0084}  & 0.7817  & \textbf{0.0184}  & 0.7268  & \textbf{0.0300}  & 0.7555  &\textbf{ 0.0104}  & 0.7723  & \textbf{0.0218}  & 0.7571  & \textbf{0.0184}  & 0.8390  & \textbf{0.0153}   \\
%  GDOD & 0.7922  &\textbf{ 0.0084}  & 0.7817  & \textbf{0.0184}  & 0.7268  & \textbf{0.0300}  & 0.7555  &\textbf{ 0.0104}  & 0.9539  & \textbf{0.0064}  & 0.9950  & \textbf{0.0007}  & 0.9187  & \textbf{0.0057}  \\
 \hline
\toprule
\end{tabular}
}
\label{tab: optimization}
%\vspace{5pt}
\end{table*}

\begin{figure*}[h]
    \setlength{\abovecaptionskip}{1pt}
    \centering
    \captionsetup{font=bf}
    \subfigure[BookCrossing task1 loss.]{
        \includegraphics[width=120pt, trim=0pt 0pt 0pt 0pt, clip]{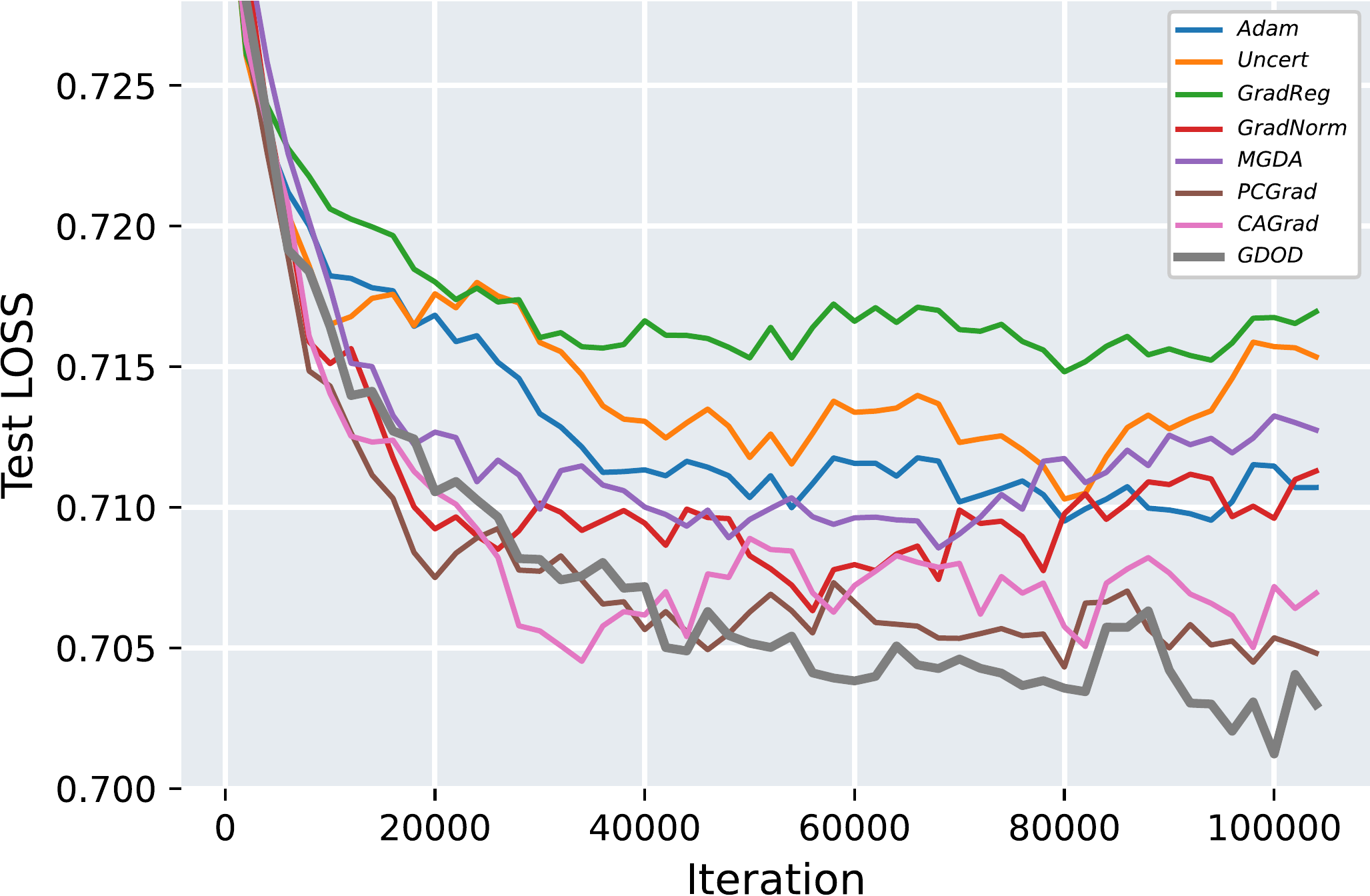}}
    \subfigure[BookCrossing task2 loss.]{
        \includegraphics[width=120pt, trim=0pt 0pt 0pt 0pt, clip]{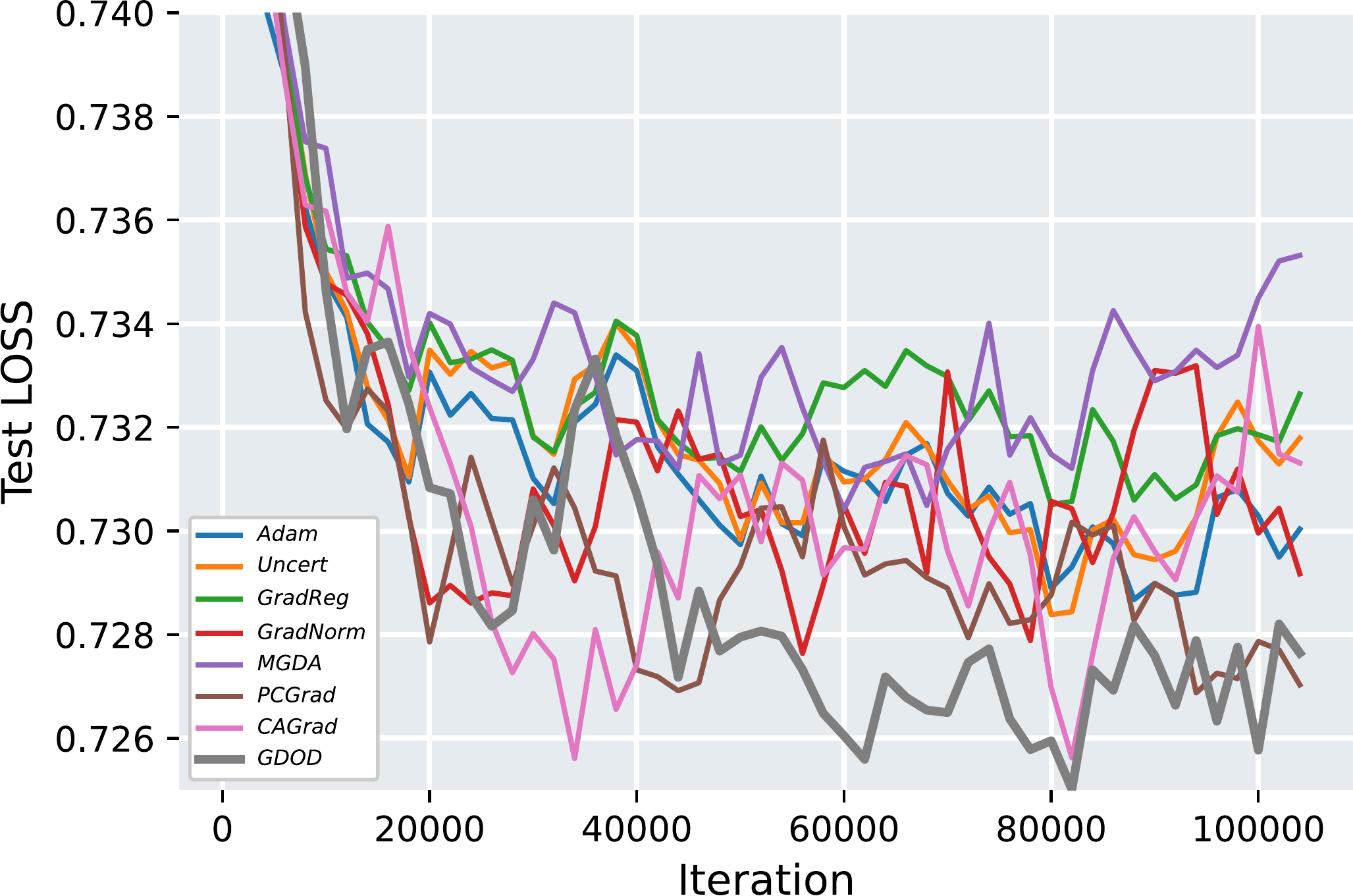}}
    \subfigure[IJCAI-15 task1 loss.]{
        \includegraphics[width=120pt, trim=0pt 0pt 0pt 0pt, clip]{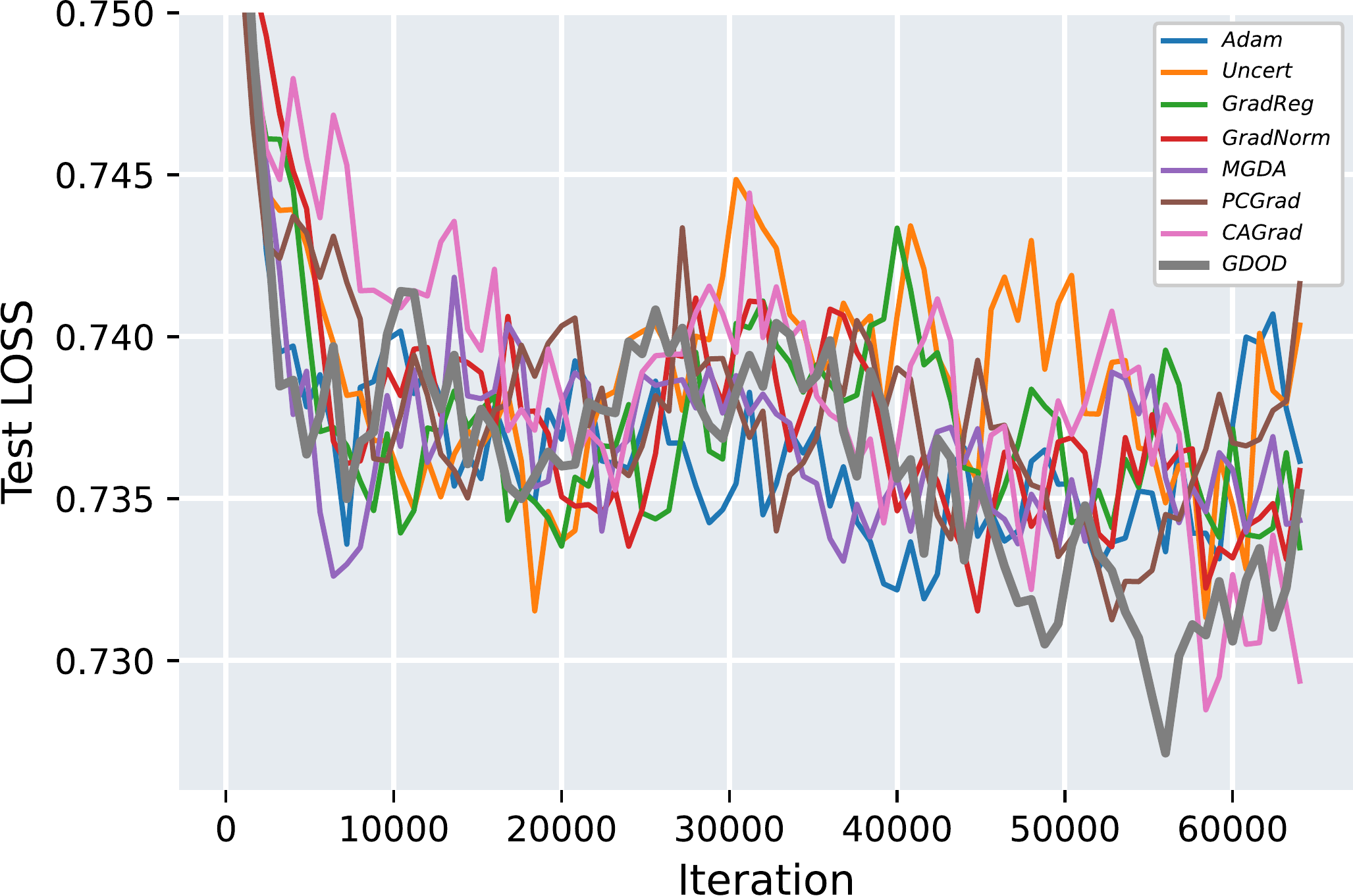}}
    \subfigure[IJCAI-15 task2 loss.]{
        \includegraphics[width=120pt, trim=0pt 0pt 0pt 0pt, clip]{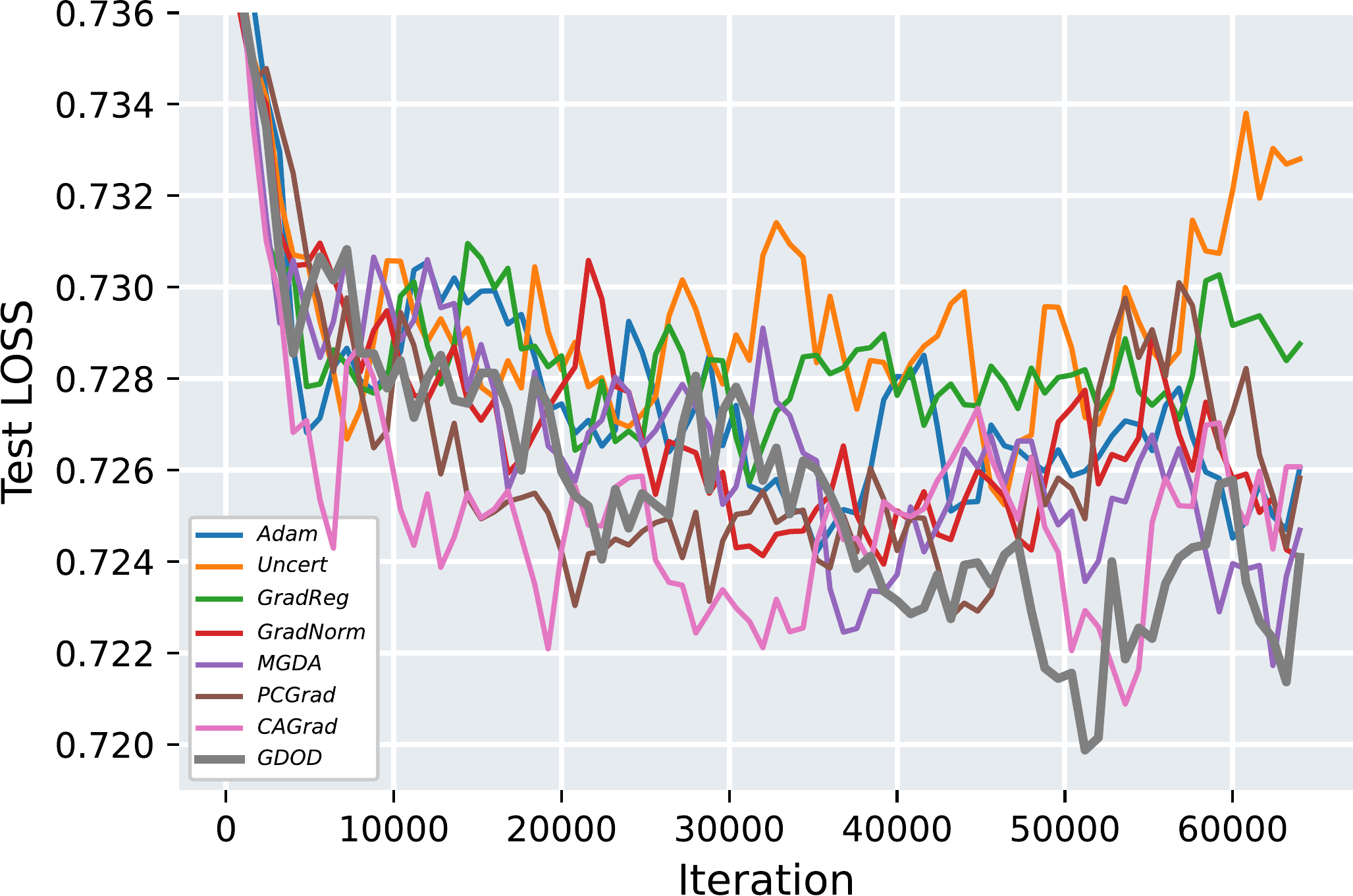}}
    \caption{Test loss comparisons about several optimization methods on BookCrossing and IJCAI-15 datasets.
    In all cases GDOD outperforms all other optimization methods.}
    \label{fig:Loss Comparisons}
    %\vspace{3pt}
\end{figure*}

\begin{figure*}[htb]
\setlength{\abovecaptionskip}{1pt}
    \centering
    \captionsetup{font=bf}
    % \vspace{-4mm}
    % \captionsetup{font=bf}
    \subfigure[Task1 on BookCrossing]{
        \includegraphics[width=119pt, trim=0pt 0pt 0pt 0pt, clip]{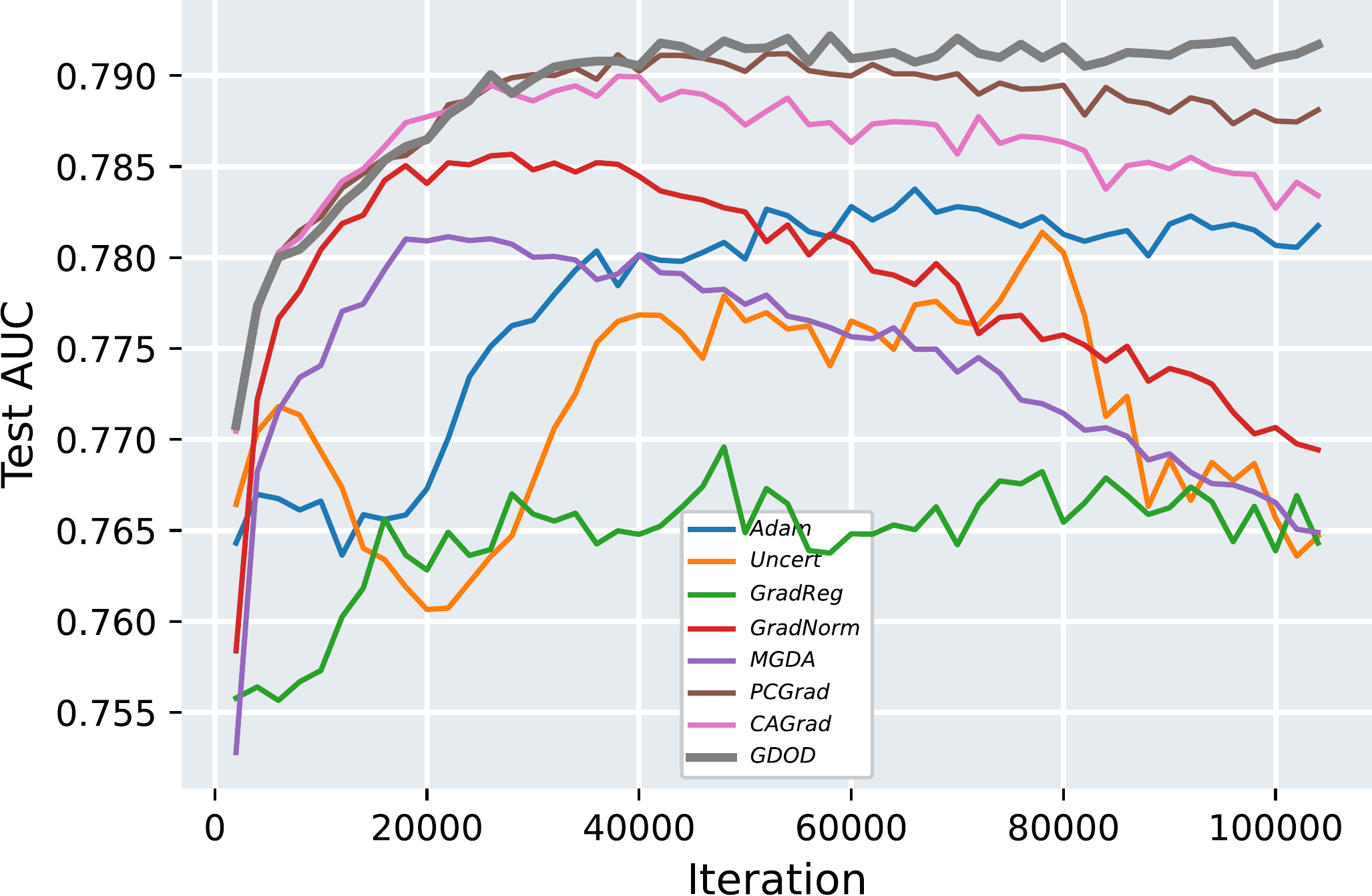}}
    \subfigure[Task2 on BookCrossing]{
        \includegraphics[width=119pt, trim=0pt 0pt 0pt 0pt, clip]{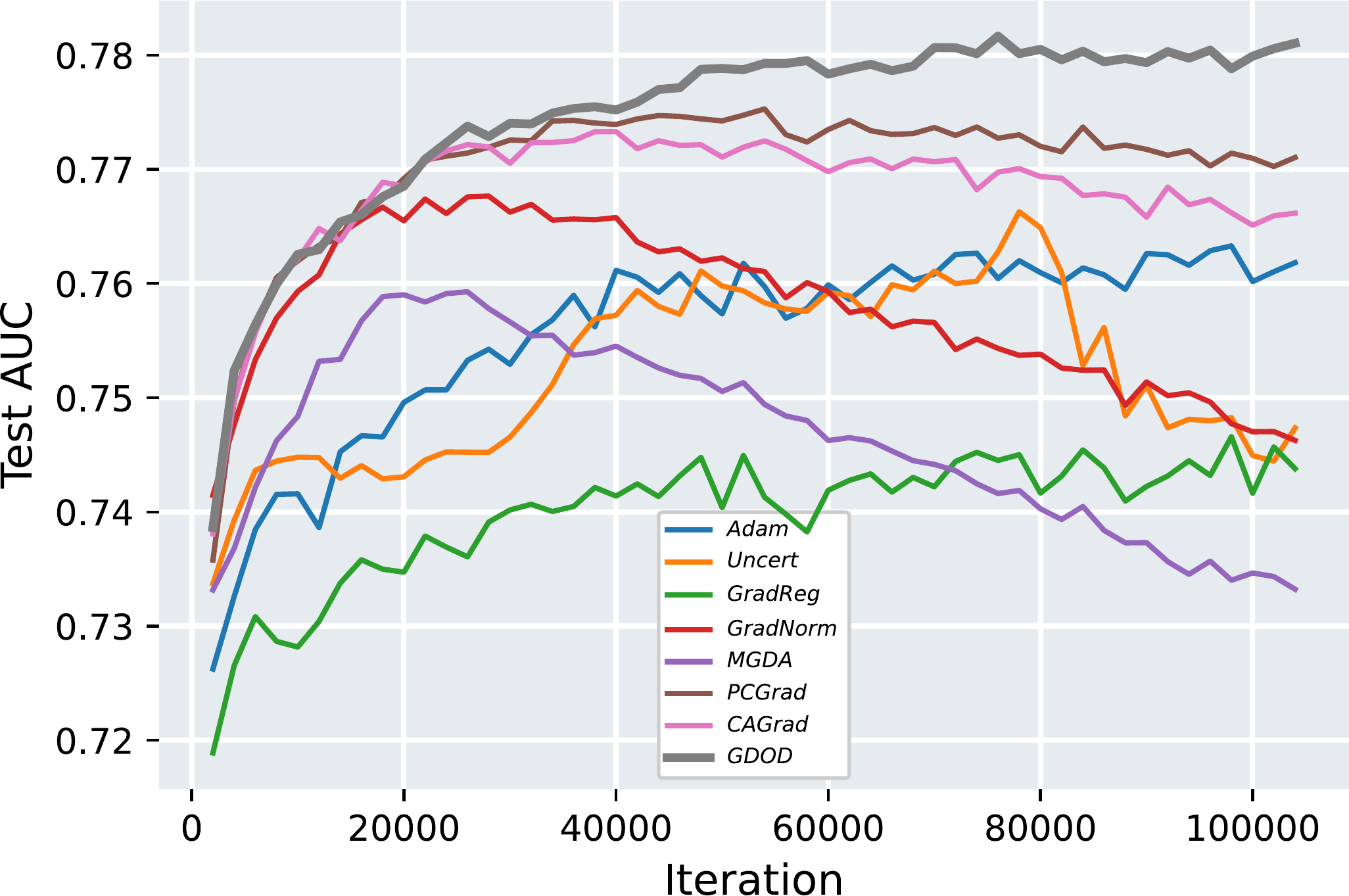}}
    % \quad
    \subfigure[Task1 on IJCAI-15]{
        \includegraphics[width=119pt, trim=0pt 0pt 0pt 0pt, clip]{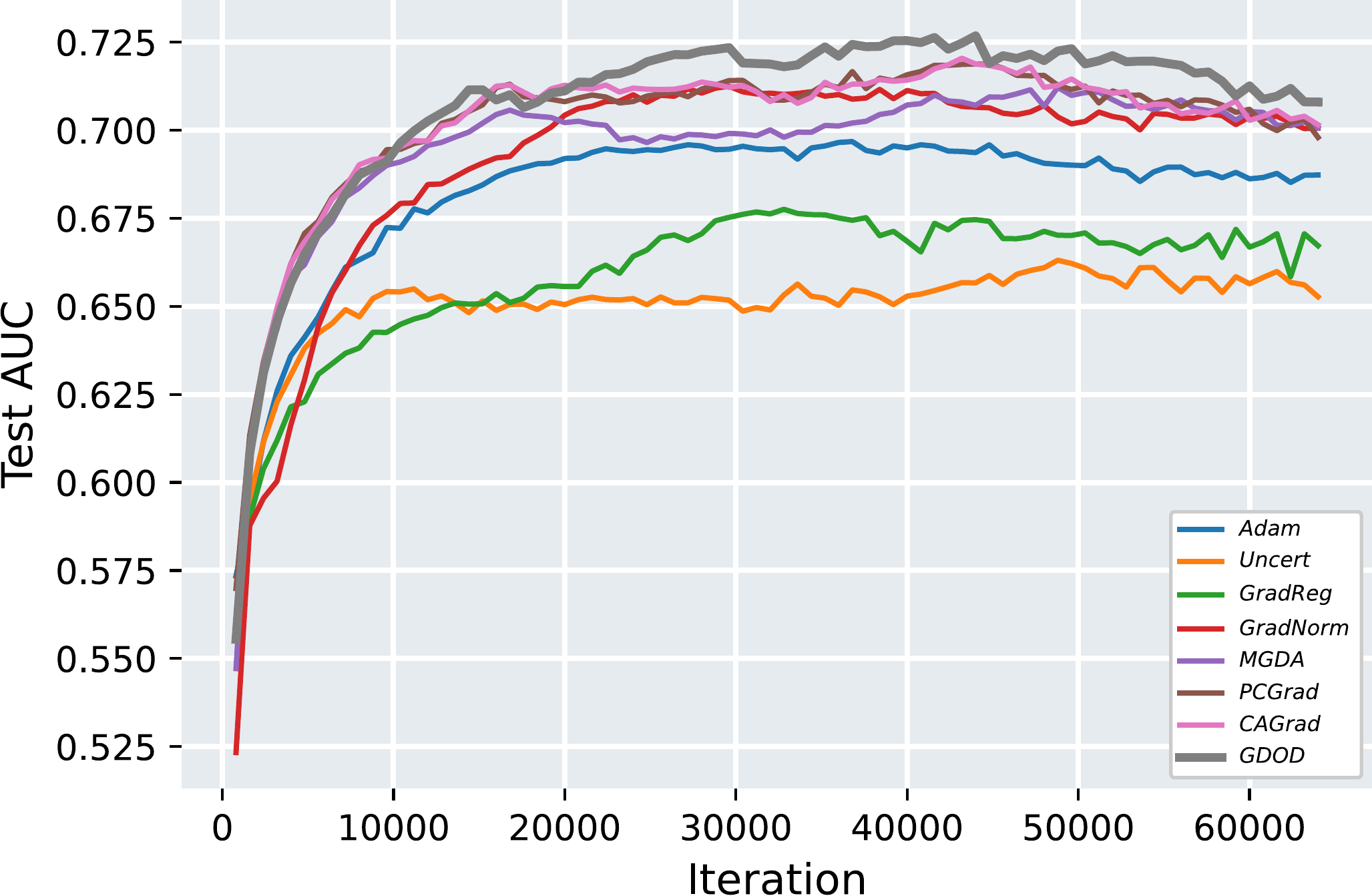}}
    \subfigure[Task2 on IJCAI-15]{
        \includegraphics[width=119pt, trim=0pt 0pt 0pt 0pt, clip]{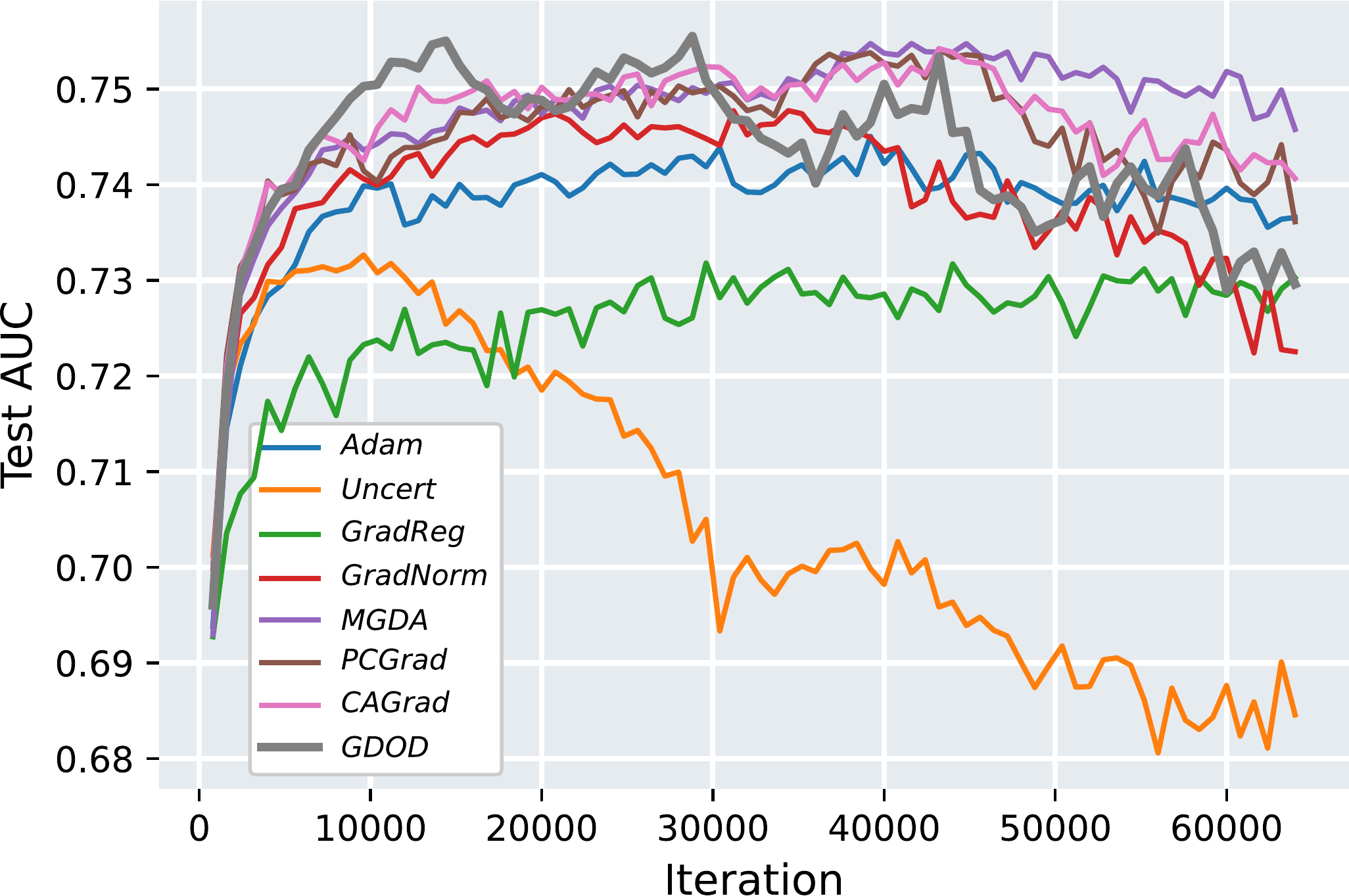}}
        
    \caption{Test AUC comparisons about several optimization methods on BookCrossing and IJCAI-15 datasets.
    In all cases GDOD outperforms all other optimization methods.}
    \label{fig:AUC Comparisons}
    \vspace{3pt}
\end{figure*}

\subsection{Optimization Method Comparison}~\label{sec:optimization}
% In experiments, we compared GDOD with several optimization methods on a shared-bottom MTL model to prove the general effectiveness of this optimization approach. 
Table~\ref{tab:commparison optimazation} shows the AUC of the comparative results on the BookCrossing dataset, IJCAI-15 dataset and Advertising dataset.
Focusing on the detail, the shared-bottom model combined with GDOD achieves higher AUC compared to other optimization methods.
Moreover, we have the following four observations: 1) GDOD, PCGrad and CAGrad outperform other five optimization methods.
This indicates that optimization methods manipulated per-task gradients are more practical.
2) GDOD achieves better performances than PCGrad and CAGrad, e.g., GDOD achieves 0.0064 and 0.0084 AUC gains compared to PCGrad and CAGrad in task2 with BookCrossing dataset respectively.
The magnitude of this improvement is fairly significant.
Because GDOD implements a decomposition method that can distinguish the conflicting gradients effectively.
3) Several situations with Uncert and GradReg are proven to be worse than Adam, showing the applicability of re-weighting methods is poor. 
4) MGDA seems to perform worse than some re-weighting methods in some tasks. 
This is because MGDA will attenuate the performance of tasks that have higher gradients.
Overall, these results verify that GDOD is a highly effective optimization method to avoid task competition.

Moreover, Figure~\ref{fig:Loss Comparisons} illustrates the test loss curves during the training procedure on BookCrossing and IJCAI-15 datasets.
From these curves, GDOD can be shown to achieve the lowest LogLoss than any other optimization method with a fixed step size.
%a faster convergence rate than any other optimization method with a fixed step size.
Therefore, these results demonstrate that GDOD can accelerate convergence and achieve good performance at the same step.
We also show the AUC curves during the training procedure on the BookCrossing dataset and IJCAI-15 dataset in Figure~\ref{fig:AUC Comparisons}.
It is observed that GDOD achieves the highest AUC compared to all the other optimization methods.
Moreover, with a fixed training step, GDOD performs the best performance in most experiments.
These results demonstrate that GDOD outperforms other compared SOTA optimization methods.

\begin{table*}[h]
  \centering
  \captionsetup{font=bf}
  \caption{Performance of GDOD with MTL models on three datasets. The metrics are the average AUC and the average Logloss on the test dataset.
  }
\renewcommand\arraystretch{1.1}

% \resizebox{\textwidth}{17mm}{
\resizebox{507pt}{71pt}{
\begin{tabular}{ c c|c c | c c|c c | c c | c c | c c | c c}
 \toprule
 \multicolumn{2}{c}{\multirow{2}{*}{\textbf{Method}}}  &  \multicolumn{4}{|c|}{\textbf{BookCrossing} } & \multicolumn{4}{c|}{\textbf{IJCAI-15}} &  \multicolumn{6}{c}{\textbf{Alipay Advertising}}   \\
 \cline{3-16}
  \multicolumn{2}{c}{} & \multicolumn{2}{|c|}{Task1} & \multicolumn{2}{c}{Task2} & \multicolumn{2}{|c|}{Task1} & \multicolumn{2}{c}{Task2} & \multicolumn{2}{|c|}{Task1} & \multicolumn{2}{c}{Task2} & \multicolumn{2}{c}{Task3}  \\

 Basemodel & Optimizer & AUC & LogLoss & AUC & LogLoss & AUC & LogLoss & AUC & LogLoss & AUC & LogLoss & AUC & LogLoss & AUC & LogLoss \\
 \hline
 \hline
 \multirow{2}{*}{Shared-Bottom} 
& Adam & 0.7838 & 0.7068 & 0.7633 & 0.7251 & 0.6968 & 0.7257 & 0.7451 & 0.7187 & 0.7505 & 0.7096 & 0.7387 & 0.6994 & 0.8237 & 0.6992 \\
& GDOD & 0.7922 & 0.6969 & 0.7817 & 0.7211 & 0.7268 & 0.7231 & 0.7555 & 0.7159 & 0.7723 & 0.6991 & 0.7571 & 0.6951 & 0.8390 & 0.6956 \\

\hline
\multirow{2}{*}{Cross-Stitch} 
& Adam & 0.7771 & 0.7093 & 0.7543 & 0.7333 & 0.6932 & 0.7286 & 0.7421 & 0.7217 & 0.7518 & 0.7023 & 0.7436 & 0.6975 & 0.8256 & 0.6971 \\
& GDOD & 0.7954 & 0.6860 & 0.7830 & 0.7203 & 0.7114 & 0.7248 & 0.7567 & 0.7142 & 0.7708 & 0.6992 & 0.7577 & 0.6953 & 0.8384 & 0.6975 \\
 
 \hline
 \multirow{2}{*}{MMoE} 
& Adam & 0.7772 & 0.7097 & 0.7519 & 0.7386 & 0.6933 & 0.7302 & 0.7463 & 0.7182 & 0.7646 & 0.7030 & 0.7481 & 0.6954 & 0.8286 & 0.6964 \\
& GDOD & 0.7912 & 0.7019 & 0.7789 & 0.7216 & 0.7169 & 0.7243 & 0.7532 & 0.7157 & 0.7716 & 0.6997 & 0.7596 & 0.6944 & 0.8353 & 0.6973 \\
 
  \hline
 \multirow{2}{*}{PLE} 
& Adam & 0.7810 & 0.7078 & 0.7595 & 0.7300 & 0.6945 & 0.7315 & 0.7481 & 0.7173  & 0.7686 & 0.7060 & 0.7494 & 0.6950 & 0.8308 & 0.6973 \\
& GDOD & 0.7905 & 0.7032 & 0.7751 & 0.7243 & 0.7196 & 0.7239 & 0.7578 & 0.7132  & 0.7739 & 0.6989 & 0.7535 & 0.6968 & 0.8417 & 0.6955 \\
 
  \hline
 \multirow{2}{*}{SNR} 
& Adam & 0.7807 & 0.7067 & 0.7630 & 0.7278 & 0.7017 & 0.7278 & 0.7479 & 0.7180  & 0.7692 & 0.7014 & 0.7424 & 0.6985 & 0.8305 & 0.6967  \\
& GDOD & 0.7895 & 0.7033 & 0.7736 & 0.7246 & 0.7103 & 0.7251 & 0.7541 & 0.7161  & 0.7745 & 0.7003 & 0.7566 & 0.6955 & 0.8388 & 0.6959  \\
 \hline
\toprule
\end{tabular}
}
\label{tab: mtl_auc}
%\vspace{3pt}
\end{table*}

\begin{table*}[htb]
%  \small
%  \setlength{\abovecaptionskip}{1pt}
  \centering
  \captionsetup{font=bf}
  \caption{AUC comparisons of different decomposition methods. 
  The baseline MTL model is Shared-Bottom. 
  %\emph{Diff} measures the AUC gap between the compared decomposition methods and the one we used in GDOD. 
  \emph{Diff} measures the AUC gap between the decomposition method used in GDOD and other decomposition methods.
  %R.A.D. is the randomized approximate matrix decomposition method mention above.
  }
\renewcommand\arraystretch{1.0}

% \resizebox{\textwidth}{17mm}{
\resizebox{507pt}{40pt}{
\begin{tabular}{ c |c c | c c|c c | c c|c c | c c | c c }
 \toprule
 \multicolumn{1}{c}{\multirow{3}{*}{\textbf{Decomposition Method}}}  &  \multicolumn{4}{|c|}{\textbf{BookCrossing} } & \multicolumn{4}{c|}{\textbf{IJCAI-15}} & \multicolumn{6}{c}{\textbf{Alipay Advertising}}  \\
 \cline{2-15}
  \multicolumn{1}{c}{} & \multicolumn{2}{|c|}{Task1} & \multicolumn{2}{c}{Task2} & \multicolumn{2}{|c|}{Task1} & \multicolumn{2}{c}{Task2} & \multicolumn{2}{|c|}{Task1} & \multicolumn{2}{c}{Task2} & \multicolumn{2}{|c}{Task3} \\
  & AUC & Diff & AUC & Diff & AUC & Diff & AUC & Diff & AUC & Diff & AUC & Diff & AUC & Diff \\
 \hline
 \hline
Random & 0.5897  & -0.2025  & 0.5848  & -0.1969  & 0.5623  & -0.1645  & 0.5477  & -0.2078  & 0.5584  & -0.2139  & 0.5434  & -0.2137  & 0.6303  & -0.2087 \\
QR & 0.7927  & 0.0005  & 0.7720  & -0.0097  & 0.7091  & -0.0177  & 0.7449  & -0.0106  & 0.7630  & -0.0093  & 0.7422  & -0.0149  & 0.8287  & -0.0103 \\\
RandDec & 0.7926  & 0.0004  & 0.7776  & -0.0041  & 0.7210  & -0.0058  & 0.7538  & -0.0017  & 0.7711  & -0.0012  & 0.7495  & -0.0076  & 0.8322  & -0.0068 \\
SVD & 0.7922  & - & 0.7817  & - & 0.7268  & - & 0.7555  & - & 0.7723  & - & 0.7571  & - & 0.8390  & - \\
\toprule
\end{tabular}
}
%\vspace{3pt}
\label{tab: decomposition methods}
\end{table*}

\subsection{GDOD with Multi-task Models} 
\label{sec:gdod with mtl}
Table~\ref{tab: mtl_auc} shows the AUC and Logloss of the comparison results on BookCrossing, IJCAI-15 and Alipay Advertising datasets.
Focusing on the detail of Table~\ref{tab: mtl_auc}, all MTL models combined with GDOD achieve higher AUC and lower Logloss compared to the original MTL models. 
These results confirm that GDOD improves the performance for multi-task learning benchmarks by avoiding interference across all task gradients.
For example, the Cross-Stitch model with GDOD optimization achieves 0.0287 AUC gain compared to the original model in task2 with the BookCrossing dataset.
The magnitude of this improvement is fairly significant.
Moreover, some MTL networks also have addressed the negative transfer phenomenon. 
For example, PLE separates shared components and task-specific components and adopts a progressive routing mechanism to reduce negative transfer.
We can see that PLE outperforms other networks, such as MMOE and Cross-Stitch.
PLE with GDOD also achieves 0.01 AUC gain compared to the original model in most tasks.
It validates the effectiveness of GDOD and proves that mitigating conflicting gradients can boost the performance of MTL models.

\subsection{Ablation Study: Effect of Different Decomposition Methods}
\label{sec:decom compa}
%\subsubsection{Decomposition Method Comparison.}
In this section, we examine the effect of different decomposition methods in GDOD.
Our approach relies on the singular vectors from SVD to define the basis to identify the positive and negative components of each task gradients.
We compare SVD with several decomposition methods:
\begin{itemize}
    \item \textbf{Random} obtains the basis spanned by $r$ randomly chosen orthogonal vectors in $\mathbb{R}$.
    \item \textbf{QR Decomposition} is directly to decompose a matrix and seek the matrix column space as the orthogonal basis. Gram-Schmidt is a commonly used method to achieve this decomposition.
    \item \textbf{Randomized Approximate Matrix Decomposition (RandDec)}~\cite{halko2011finding} follows the framework that usually projects the original matrix to a low-rank sample space and then computes the approximate decomposition of the original matrix. 
\end{itemize}

Table~\ref{tab: decomposition methods} shows the AUC comparisons of different decomposition methods on BookCrossing, IJCAI-15 and Alipay Advertising datasets.
From Table~\ref{tab: decomposition methods}, we can see that SVD performs the best in most situations and Random achieves the worst performance.
We also observed a phenomenon that the magnitude of AUC \emph{diff} about task2 is greater than task1 in the BookCrossing dataset.
It demonstrates that a good choice of decomposition methods can mitigate the negative transfer across all tasks.

\begin{table*}[htb]
  \centering
  \captionsetup{font=bf}
  \caption{Performance comparisons of different optimization methods on Census-Income dataset. 
  The baseline MTL model is Shared-Bottom.
  }
\renewcommand\arraystretch{1.0}

\resizebox{500pt}{60pt}{
\begin{tabular}{ c|c c|c c|c c|c c|c c|c c }
 \toprule
 \multicolumn{1}{c}{\multirow{2}{*}{\textbf{Method}}}  & \multicolumn{2}{|c|}{Task1} & \multicolumn{2}{c}{Task2} & \multicolumn{2}{|c|}{Task3} & \multicolumn{2}{c}{Task4} & \multicolumn{2}{|c|}{Task5} & \multicolumn{2}{c}{Task6} \\
 ~ & AUC & Gain & AUC & Gain & AUC & Gain & AUC & Gain & AUC & Gain & AUC & Gain \\
 \hline
 \hline
 Adam & 0.93632 & ~ & 0.99350 & ~ & 0.90114 & ~ & 0.98416 & ~ & 0.83914 & ~ & 0.84135 & ~ \\
 Uncert & 0.93997 & 0.00365 & 0.99351 & 0.00001 & 0.90711 & 0.00597 & 0.98429 & 0.00013 & 0.84424 & 0.0051 & 0.85916 & 0.01781 \\
 GradReg & 0.94023 & 0.00391 & 0.99365 & 0.00015 & 0.90688 & 0.00574 & 0.98433 & 0.00017 & 0.85154 & 0.0124 & 0.86486 & 0.02351 \\
 GradNorm & 0.94307 & 0.00675 & 0.99378 & 0.00028 & 0.90657 & 0.00543 & 0.98455 & \textbf{0.00039} & 0.83941 & 0.00027 & 0.87282 & 0.03147 \\
  MGDA & 0.94060  & 0.00428  & 0.99390  & 0.00040  & 0.90480  & 0.00366  & 0.98440  & 0.00024  & 0.84690  & 0.00776  & 0.86000  & 0.01865 \\
 PCGrad & 0.93781 & 0.00149 & 0.99382 & 0.00032 & 0.90309 & 0.00195 & 0.98431 & 0.00015 & 0.84728 & 0.00814 & 0.8712 & 0.02985 \\
 CAGrad & 0.94145 & 0.00513 & 0.99415 & \textbf{0.00065} & 0.9067 & 0.00556 & 0.98437 & 0.00021 & 0.84979 & 0.01066 & 0.87646 & 0.03511 \\
 GDOD & 0.94328 & \textbf{0.00696} & 0.99408 & 0.00058 & 0.90794 & \textbf{0.00680} & 0.98429 & 0.00013 & 0.8504 & \textbf{0.01126} & 0.87659 & \textbf{0.03524} \\
%  GDOD(Threshold=5) & 0.9426 & 0.00628 & 0.99421 & 0.00071 & 0.90786 & 0.00672 & 0.98439 & 0.00023 & 0.84666 & 0.00752 & 0.87953 & 0.03818 \\
%  GDOD(Threshold=4) & 0.94255 & 0.00623 & 0.99411 & 0.00061 & 0.90714 & 0.0060 & 0.98440 & 0.00024 & 0.85582 & 0.01668 & 0.86953 & 0.02818 \\
 Weighted-GDOD & 0.94367 & \textbf{0.00735} & 0.99422 & \textbf{0.00072} & 0.90903 & \textbf{0.00789} & 0.98444 & \textbf{0.00028} & 0.85062 & \textbf{0.01148} & 0.88536 & \textbf{0.04401} \\

\hline
\toprule
\end{tabular}
}
\label{tab:wgdod}
\end{table*}

\begin{figure}
    \centering
    \captionsetup{font=bf}
    \subfigure[AUC on BookCrossing]{
        \includegraphics[width=115pt, trim=0pt 0pt 0pt 0pt, clip]{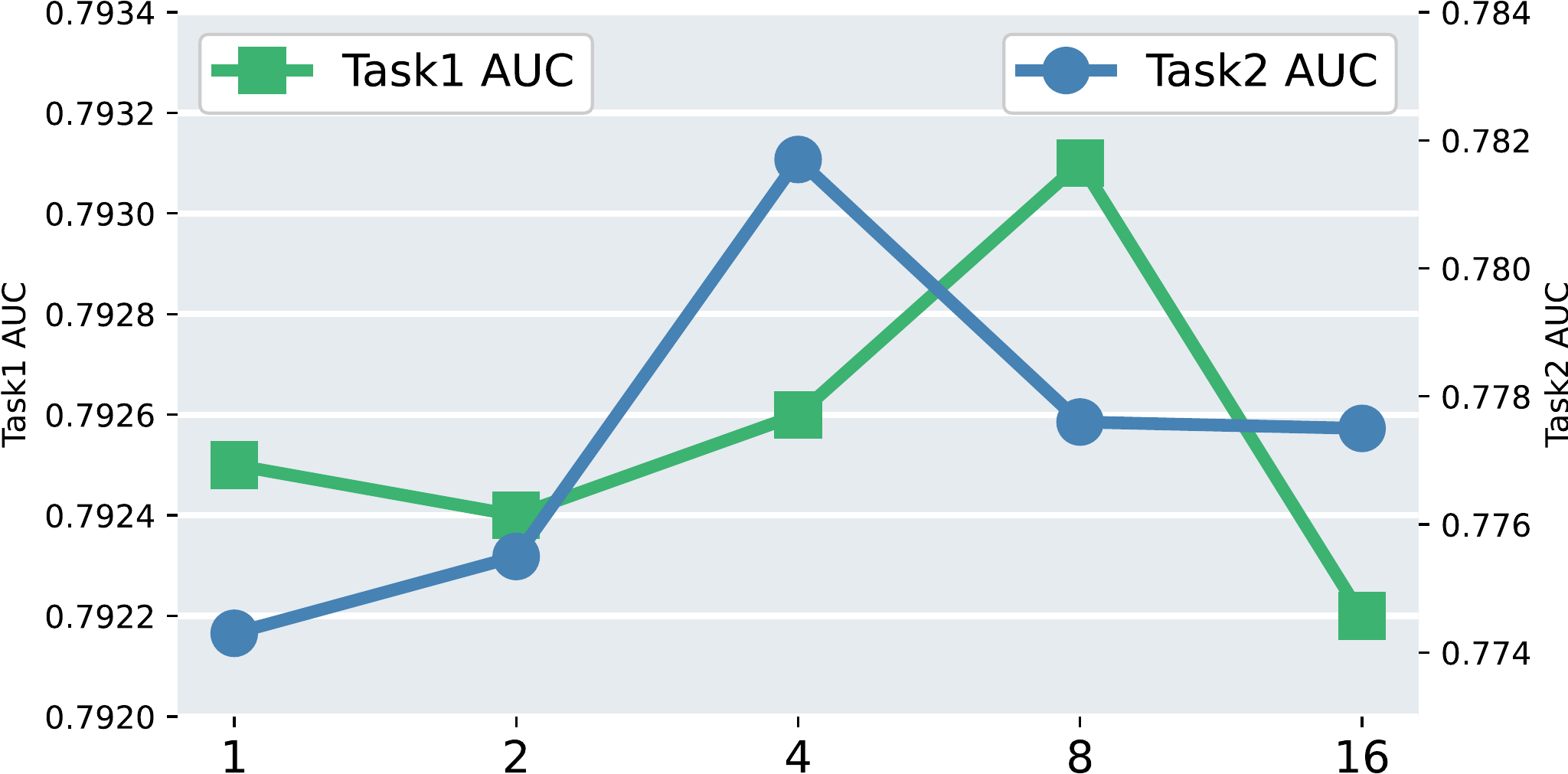}
        \label{subfig:book}}
    \subfigure[AUC on IJCAI-15]{
        \includegraphics[width=115pt, trim=0pt 0pt 0pt 0pt, clip]{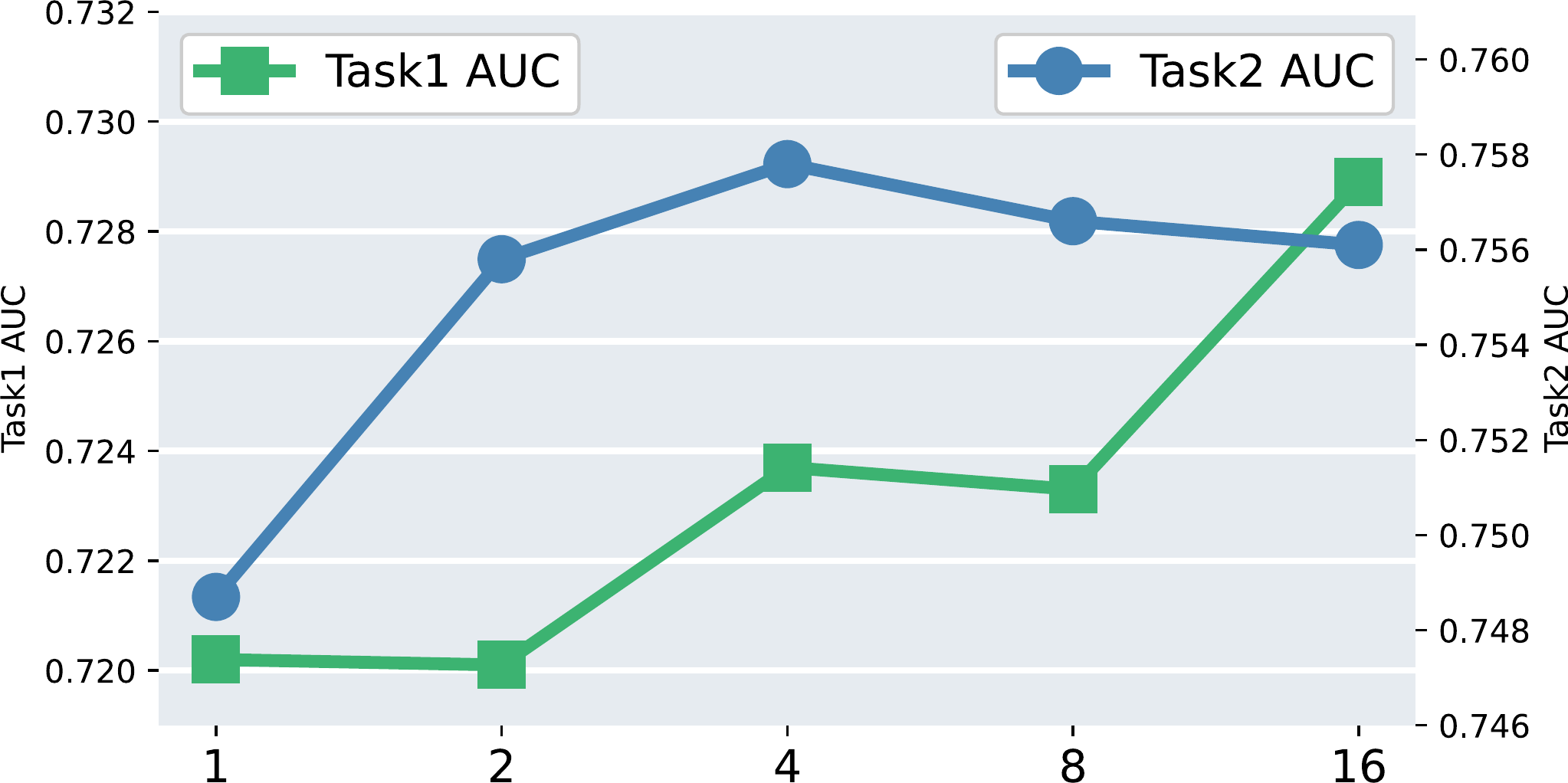}
        \label{subfig:ijcai}}
    \caption{Performance with different dimensions of subspace $\mathcal{S}$ on BookCrossing and IJCAI-15 datasets. 
    }
    \label{fig:size}
% \vspace{3pt}
\end{figure}

\subsection{Ablation Study: Effect of Different Dimensions of Subspace $\mathcal{S}$}
\label{sec:size of S}
%\subsubsection{Size of Subspace $\mathcal{S}$.}
% Another hyper-parameter of our approach is the size of subspace $\mathcal{S}$.
In this section, we examine the effect of different dimensions of subspace $\mathcal{S}$ in GDOD.
Figure~\ref{fig:size} depicts the task AUC varies with different dimension of the subspace $\mathcal{S}$ on BookCrossing and IJCAI-15 datasets.
From Figure~\ref{subfig:ijcai}, we observe that it is better to decompose all the task gradients in a larger dimensional subspace.
In general, a larger dimensional subspace possibly captures a richer description of the matrix $\mathcal{M}$.
However, Figure~\ref{subfig:book} holds the opposite phenomenon.
This is because a larger dimensional also creates the risk of over-fitting especially in a limited dataset, such as the Bookcrossing dataset.

\begin{figure}
    \centering
    \subfigure[Task1 on BookCrossing]{
        \includegraphics[width=115pt, trim=0pt 0pt 0pt 0pt, clip]{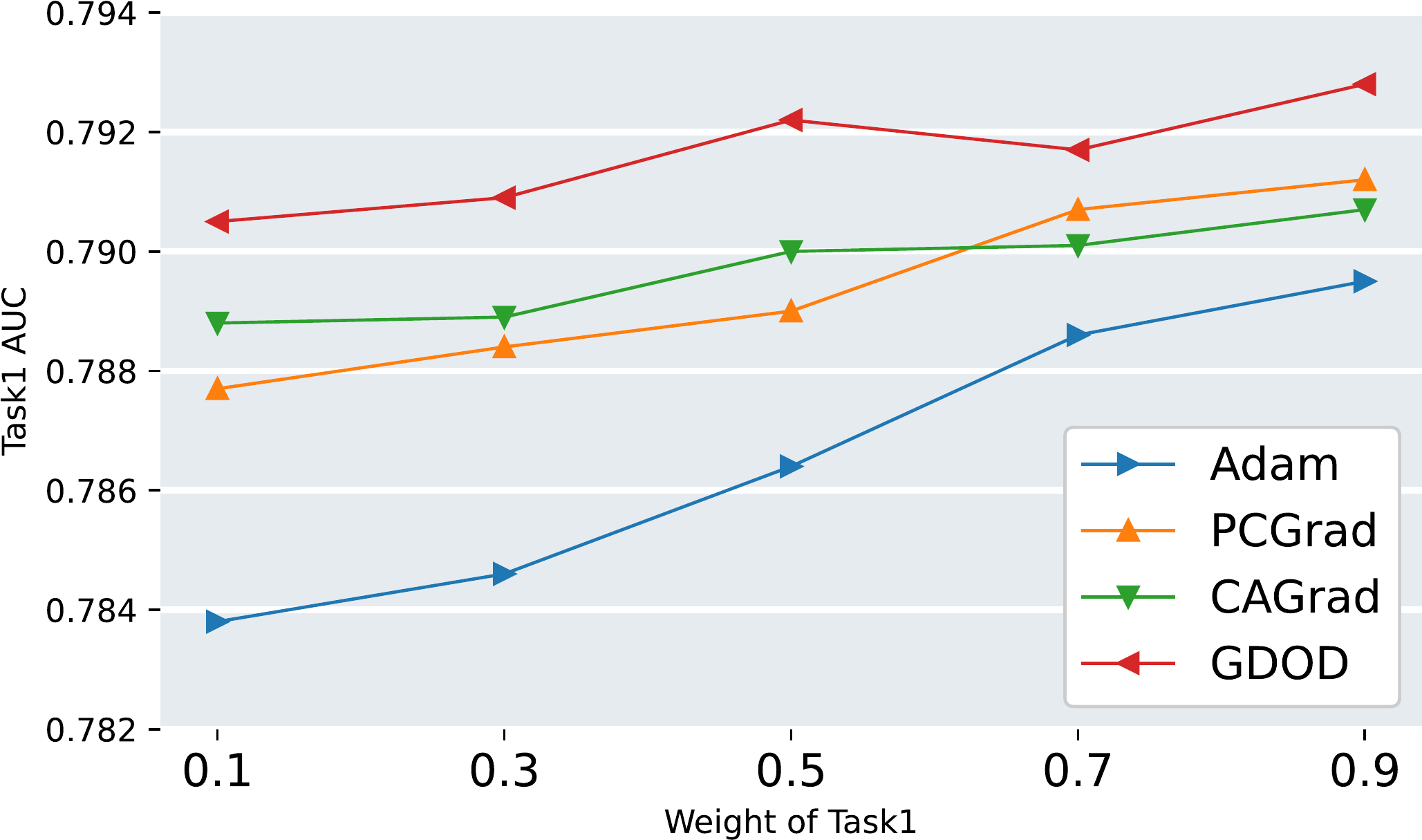}}
    \subfigure[Task2 on BookCrossing]{
        \includegraphics[width=115pt, trim=0pt 0pt 0pt 0pt, clip]{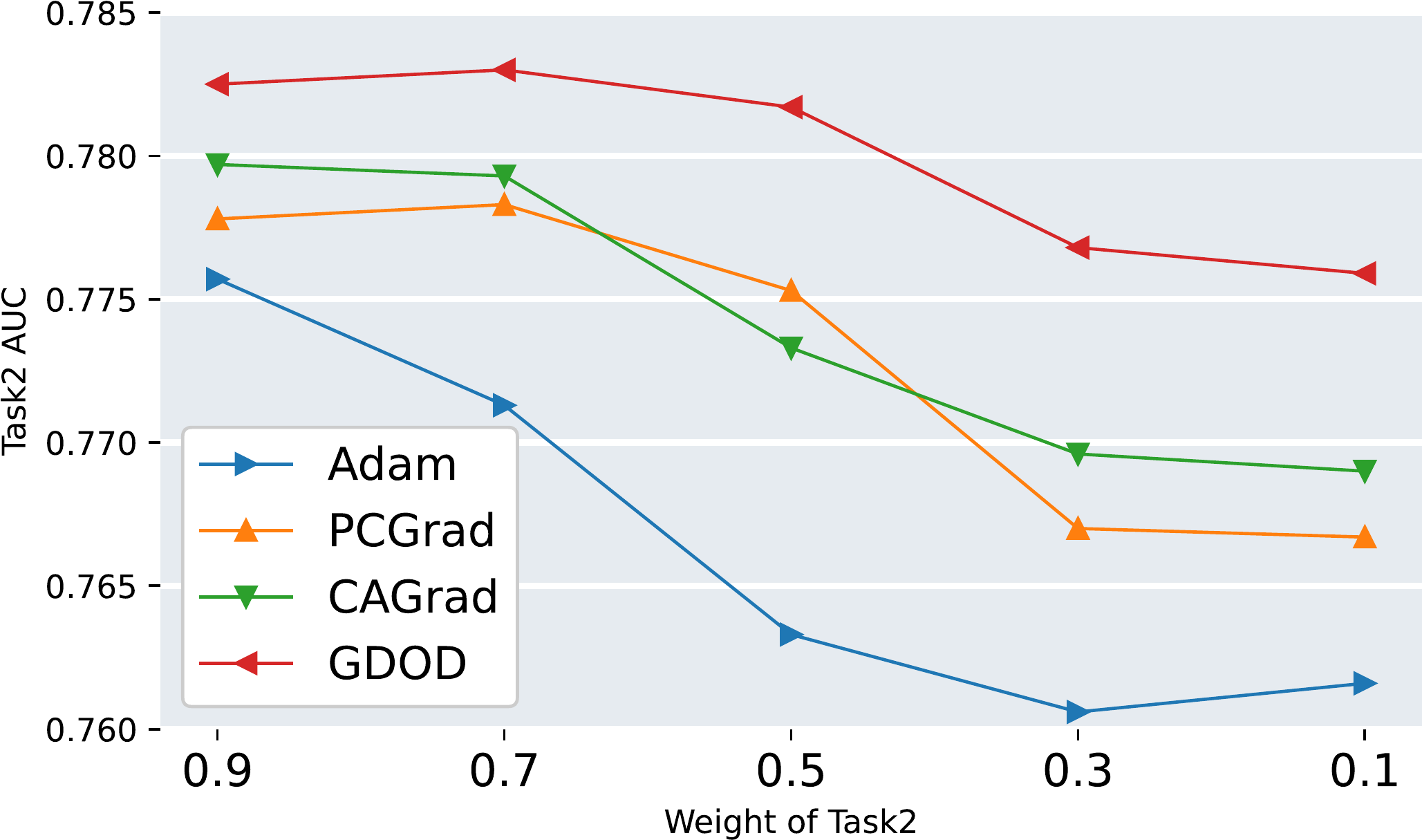}}
    % \caption{Methods comparison with different weights of tasks.
    % }
    \quad
    \subfigure[Task1 on IJCAI-15]{
        \includegraphics[width=115pt, trim=0pt 0pt 0pt 0pt, clip]{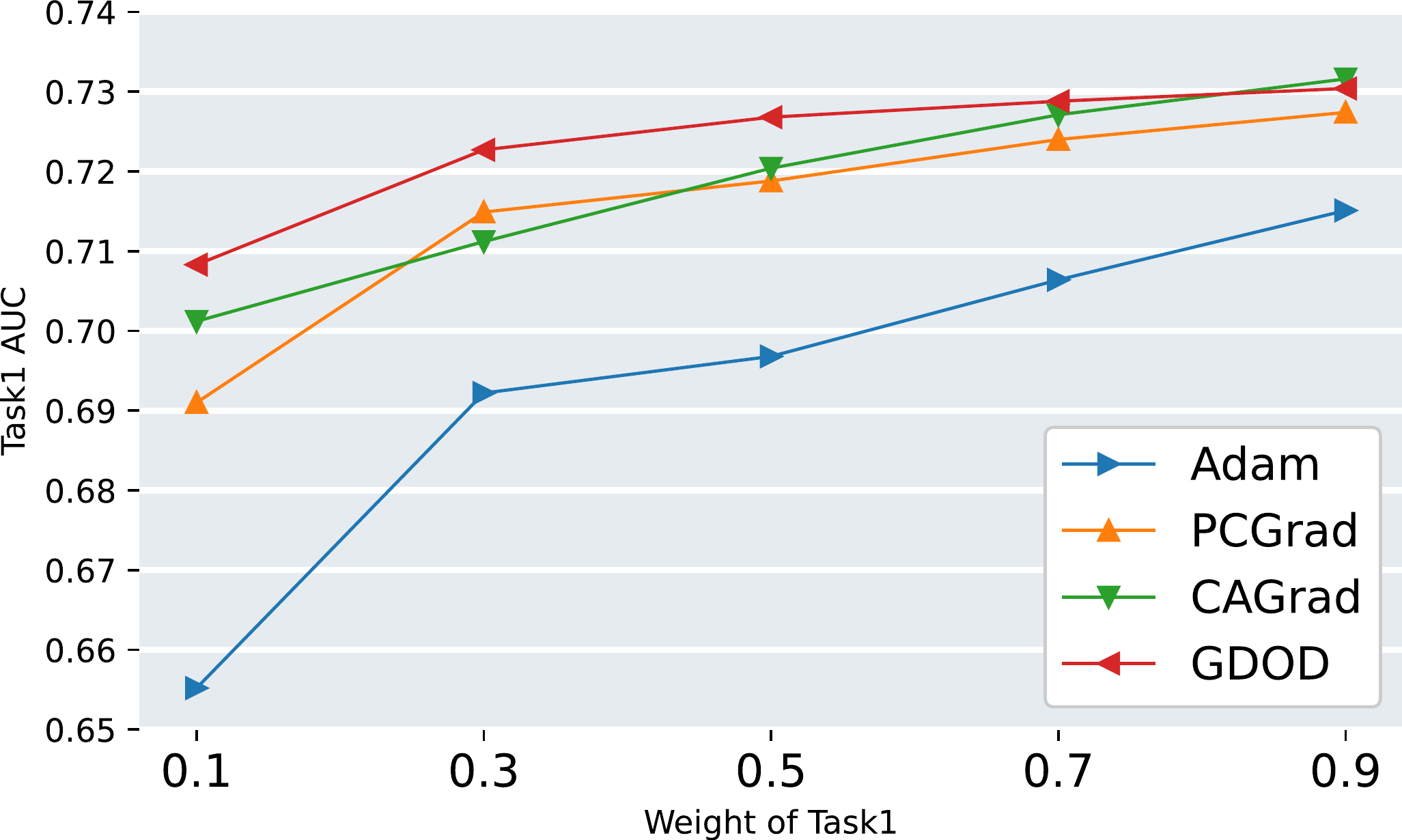}}
    \subfigure[Task2 on IJCAI-15]{
        \includegraphics[width=115pt, trim=0pt 0pt 0pt 0pt, clip]{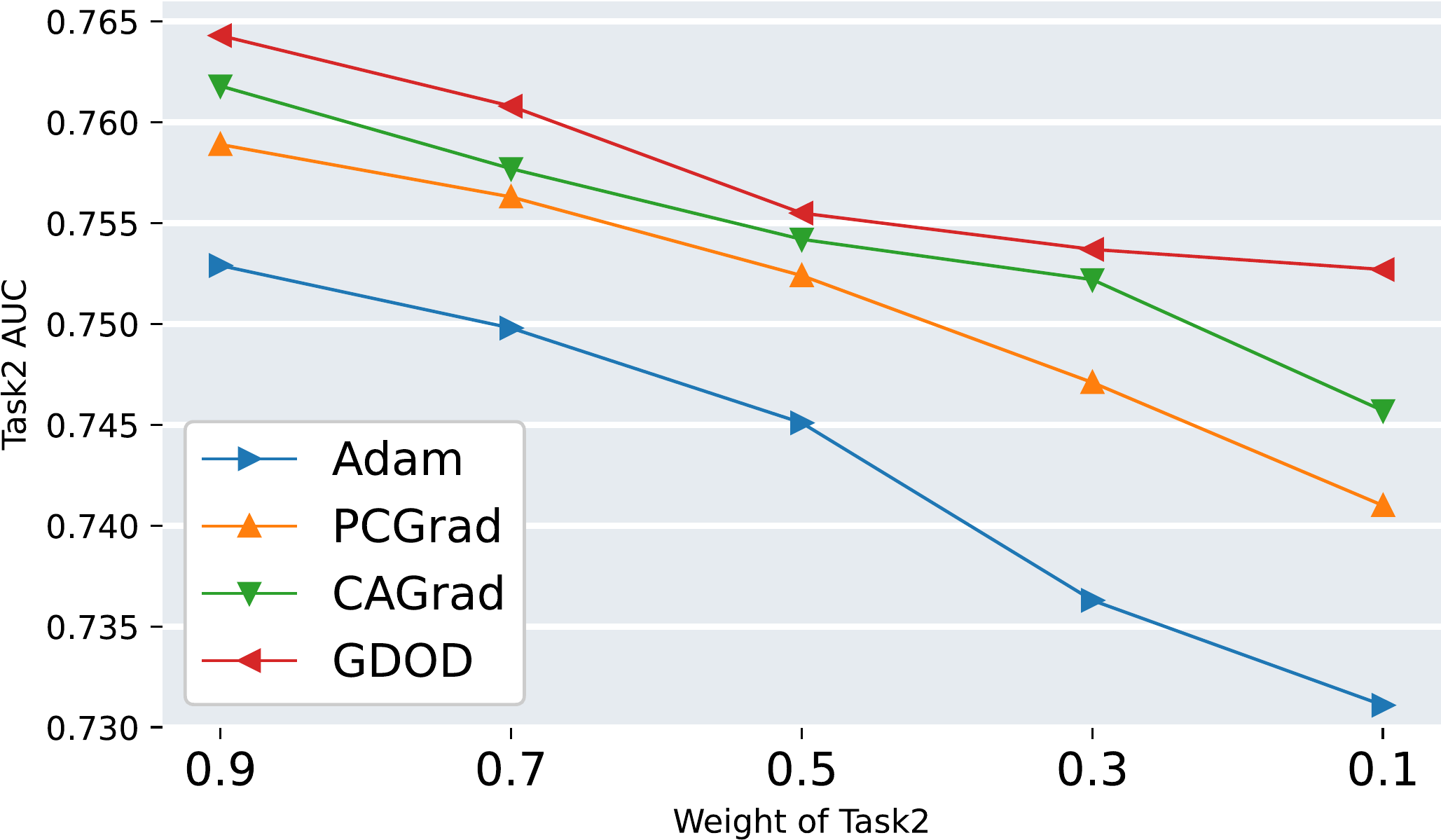}}
    \caption{Methods comparison with different weights of tasks. The sum of the weights of the two tasks is one.}
    \label{fig: varying weights}
\end{figure}

\subsection{Ablation Study: Effect of Tasks with Varying Weights}
\label{sec:uncer task weight}
In this section, we examine the effect with varying weights for each tasks.
Figure~\ref{fig: varying weights} shows AUC varies with different weights for each task on BookCrossing and IJCAI-15 datasets with several gradient-based methods.
The weight for each task is equally in the previous setting.
From Figure~\ref{fig: varying weights}, we can see that a task with a higher weight indication
probability usually receives a higher AUC.
%the AUC is higher when with a larger task weight.
%task AUC performs better with a higher weight. 
It is obviously that GDOD performs the best with varying task weights in most situations.
Moreover, for CAGrad, the performance of a task with a smaller weight (the weight for task 2 is 0.1) reduces significantly.
This is because that CAGrad searches the new updated vector is around $g_0$ (average gradient vector). 
However, the reduction for GDOD is smaller than other methods.
It verify that GDOD is a more robust algorithm.

\subsection{GDOD with More Tasks}
In Algorithm~\ref{alg:ogd}, GDOD uses the helpful components which refer to the projections of original gradients onto the basis vectors where all task gradients agree in the direction to update the model parameters.
However, as the number of tasks increases, the components of all tasks in the same direction will decrease.
To deal with more tasks, we propose a weighted-GDOD which defines a weight for task components from the dimension of basis.
For each basis vector, the gradient components of all tasks are divided into two sets $\{S^+\}$ and $\{S^-\}$ by the sign.
Suppose $\{S^+\}$ and $\{S^-\}$ have $a$ and $b$ gradient components respectively. 
The weight for each gradient component is calculated as following:
\begin{itemize}
    \item If $a \geq b$, the weights for gradient components in set $S^+$ and $S^-$ are $\frac{a-b}{K}$ and 0.
    \item If $a \textless b$, the weights for gradient components in set $S^+$ and $S^-$ are 0 and $\frac{b-a}{K}$.
\end{itemize}

We also examine the effect of weighted-GDOD with the Census-Income dataset that has six tasks.
As shown in Table~\ref{tab:wgdod}, we observe that weighted-GDOD and GDOD achieve the best performance in most tasks.
Especially, weighted-GDOD and GDOD realize significant improvements for task 5 and 6.
% It demonstrates the effectiveness of GDOD on a dataset with more tasks.
Moreover, all results with weighted-GDOD are proven to be better than GDOD, showing that GDOD with a weighted policy is more effective with more tasks.

\section{Conclusion}
In this paper, we present a novel optimization approach for MTL, GDOD, which manipulates each task gradient using a decomposition built from the span of all task gradients. 
GDOD decomposes gradients into task-shared and task-specific components explicitly and adopts a general update rule for avoiding interference across all task gradients. 
%Moreover, we show theoretically and in simulation that GDOD leads to converge faster than other optimization approaches.
Moreover, we present the convergence of GDOD theoretically under both convex and non-convex assumptions. 
% and show that GDOD converges faster than other optimization approaches in simulation. 
Experiment results on several multi-task datasets not only demonstrate the significant improvement of GDOD performed to existing MTL models but also outperform state-of-the-art optimization methods in terms of AUC metric.
Our future study would focus on exploring other decomposition methods to optimize training procedure for more effective and efficient multi-task learning.
%%
%% The next two lines define the bibliography style to be used, and
%% the bibliography file.
\bibliographystyle{ACM-Reference-Format}
\balance
\bibliography{reference}

%%
%% If your work has an appendix, this is the place to put it.

\end{document}